\documentclass{article}

\usepackage[preprint]{neurips_2026}

\usepackage[utf8]{inputenc} 
\usepackage[T1]{fontenc}    
\usepackage[hypertexnames=false]{hyperref}       
\usepackage{url}            
\usepackage{booktabs}       
\usepackage{amsfonts}       
\usepackage{nicefrac}       
\usepackage{microtype}      
\usepackage{xcolor}         
\usepackage{amsmath}
\usepackage{amsthm}
\usepackage{algorithm}
\usepackage{algpseudocode}
\usepackage{graphicx}

\newtheorem{theorem}{Theorem}
\newtheorem*{supptheorem}{Theorem}
\newtheorem{definition}{Definition}

\newtheorem{lemma}{Lemma}

\title{An Interpretable and Scalable Framework for Evaluating Large Language Models}

\author{%
  \textbf{Xinhao Qu}$^1$ \quad
  \textbf{Qiang Heng}$^{2}$ \quad
  \textbf{Hao Zeng}$^3$ \quad
  \textbf{Xiaoqian Liu}$^{1}$\thanks{Corresponding author. Email: \href{xiaoqian.liu@ucr.edu}{xiaoqian.liu@ucr.edu}} \\[2.5mm]
  $^1$Department of Statistics, University of California, Riverside\\
  $^2$School of Statistics and Data Science, Southeast University \\
  $^3$Department of Statistics and Data Science,
    Southern University of Science and Technology
}

\begin{document}

\maketitle

\begin{abstract}

Evaluation of large language models (LLMs) is increasingly critical, yet standard benchmarking methods rely on average accuracy,  overlooking both the inherent stochasticity of LLM outputs and the heterogeneity of benchmark items.  Item Response Theory (IRT) offers a principled framework for modeling latent model abilities and item characteristics, but conventional methods are computationally expensive and numerically unstable, limiting large-scale implementations.  To address these challenges, we propose an interpretable and scalable framework for LLM evaluation based on the majorization–minimization principle. Our approach reformulates the problem as a sequence of constrained matrix factorization subproblems, enabling stable and efficient parameter estimation with theoretical guarantees for identifiability and convergence.
Experiments on synthetic and real-world datasets, including MATH-500 and six Open LLM Leaderboard benchmarks, demonstrate that our method achieves superior scalability and interpretability. It delivers orders-of-magnitude speedups over competing methods while maintaining comparable or even higher estimation accuracy. Our results align with established scaling laws and offer insights into item difficulty and discrimination, informing more principled benchmark design. 

\end{abstract}

\section{Introduction}\label{sec-intro}
Given the rapid evolution of large language model (LLM) systems and their expanding applications \citep{intro1,intro2,intro3}, rigorous evaluation of LLMs is indispensable. Benchmarks have been the gold standard for developers to demonstrate model capabilities \citep{intro4,intro5}, with performance typically measured by average accuracy between ground-truth labels and model-generated outputs across all benchmark items. However, this conventional approach overlooks two fundamental challenges: the stochastic nature of LLM outputs and the heterogeneity of benchmark items.
First, LLM outputs can vary substantially under minor perturbations, such as changes in prompt order, slight typos, or the use of synonyms \citep{intro6,intro7,intro8}. Under greedy decoding settings, outputs can be  inconsistent even when presented with identical prompts \citep{intro9}, leading to fragile evaluations and complicating subsequent interpretation.
Second, benchmark items exhibit substantial heterogeneity in their evaluative utility, in contrast to the implicit equal-weight assumption
underlying average accuracy. Highly discriminative items can better differentiate model capabilities and thus warrant greater weights \citep{intro10}, whereas items compromised by training data contamination lose their evaluative integrity and should be downweighted \citep{intro11}. Furthermore, as benchmarks evolve, some items become obsolete or redundant \citep{intro12}, which reduces evaluation efficiency and warrants exclusion.

These challenges call for evaluation frameworks that can recover intrinsic model abilities under stochastic generation while capturing heterogeneity across benchmark items. Recently, psychometricians have brought Item Response Theory (IRT) into the picture \citep{irt3,intro17,lart}, framing LLM evaluation as a latent variable modeling problem analogous to human performance assessment. IRT models each binary evaluation outcome as a sample drawn from a latent distribution, parameterized by both model traits and item features. Estimating these parameters enables a systematic decomposition of model performance, moving beyond average accuracy toward a more interpretable and diagnostically informative evaluation paradigm.

However, traditional IRT-based evaluation methods \citep{mirt,lart} that rely on the expectation-maximization (EM) algorithm \citep{EM} suffer from  critical limitations, most notably \textbf{computational inefficiency} and \textbf{numerical instability}. These methods are often prohibitively slow \citep{virt,pyirt}, and their performance is extremely sensitive to singular data configurations, such as near-constant responses and collinearity \citep{irtNA,irt2}. Such issues are particularly pronounced in LLM evaluation settings, where training data leakage or extremely difficult items frequently arise \citep{intro11,redun2}. As a result, existing IRT approaches exhibit \textbf{poor scalability} and remain largely restricted to relatively \textbf{small-scale applications}, making them ill-suited for the rapidly growing ecosystem of LLMs and benchmark datasets.



To address these limitations, we propose an interpretable and scalable computational framework for LLM evaluation based on the majorization-minimization (MM) principle \citep{MM3,MM4}. Specifically, we reformulate the original problem as a sequence of constrained matrix factorization subproblems,
thereby stabilizing and accelerating the estimation procedure. We further address parameter identifiability and establish theoretical guarantees for algorithmic convergence. Our primary contributions are  as follows:
\begin{itemize}
  \item \textbf{A novel MM-based framework}: We develop an interpretable and scalable computational framework for LLM evaluation grounded in MM optimization, well-suited to large-scale evaluation datasets and capable of offering reliable and meaningful interpretations.

 \item \textbf{Theoretical guarantees}: We provide theoretical guarantees for parameter identifiability and algorithmic convergence, ensuring reliable recovery of model abilities and item features.

 \item \textbf{Computational efficiency \& scalability}: Extensive large-scale experiments demonstrate orders-of-magnitude speedup over competing methods while achieving comparable or even superior estimation accuracy.

 \item \textbf{Real-world applications \& empirical insights}: We evaluate on MATH-500 and six benchmarks from the Hugging Face Open LLM Leaderboard. Results on model abilities are consistent with established parametric scaling laws \citep{scalinglaw,scalinglaw2}, and the recovered item features align with human annotations \citep{MATH500}, capturing fine-grained heterogeneity that average accuracy fails to reflect.
\end{itemize}
Overall, this work presents a novel MM-based computational framework with rigorous theoretical guarantees, enhanced computational efficiency and scalability, and strong empirical interpretability, representing a substantial advancement over conventional IRT-based approaches for LLM evaluation.





\section{Preliminary}
We first conceptualize the evaluation process within a latent probabilistic framework.
Given a set of $N$ LLMs and $J$ benchmark items, let $y_{ij} \in \{-1, 1\}$ denote whether the $i$-th LLM correctly addressed the $j$-th benchmark item, yielding a binary response matrix $\mathbf{Y} \in \{-1, 1\}^{N \times J}$.
We assume that an LLM's output is governed by an intrinsic stochastic process, modeling the probability of a correct response via a logit link function:
\begin{equation}\label{equ:logit}
\Phi(x_{ij}) \equiv \mathbb{P}\left(y_{ij}=1 \mid x_{ij}\right) =\frac{1}{1+\exp{(-x_{ij}/\sigma)}},
\end{equation}
where $x_{ij}$ represents the pre-activation score produced by the $i$-th model for the  response of item $j$, and $\sigma$ acts as the scaling parameter. This formulation mirrors standard next-token decoding, where the multi-class softmax over the vocabulary collapses into a logistic distribution when projected onto the binary correctness space, with $\sigma$ corresponding directly to the temperature \citep{intro16}.

To obtain interpretable representations for fine-grained evaluation, we adopt a reparametrization based on the two-parameter IRT model (2PL-IRT) \citep{irt1,irt2}, expressing the score matrix as
\begin{equation}\label{equ:repara}
\mathbf{X} = \boldsymbol\theta \mathbf{a}^\top + \mathbf{1} \mathbf{b}^\top = \begin{pmatrix}
        \boldsymbol\theta & \mathbf{1}
    \end{pmatrix}\begin{pmatrix}
        \mathbf{a}^\top\\
        \mathbf{b}^\top
    \end{pmatrix} \equiv \mathbf{U}\mathbf{V}^\top,
\end{equation}
where $\boldsymbol\theta = (\theta_1, \cdots, \theta_N)^\top$, $\mathbf{a}=(a_1, \cdots, a_J)^\top$, $\mathbf{b}=(b_1, \cdots, b_J)^\top$, and $\mathbf{1} \in \mathbb{R}^N$ is a vector of ones.
Each entry $x_{ij}$ is decomposed as $x_{ij} = a_j \theta_i + b_j$, where $a_j$ denotes item discrimination, $\theta_i$ represents model ability, and  $b_j$ is an item easiness parameter. This formulation reveals a critical coupling: items with larger $a_j$ amplify differences in model ability, yielding stronger discriminative power, whereas items with small $a_j$ provide limited signal for distinguishing model capabilities.
Following standard 2PL-IRT frameworks, we impose $a_j$  to be non-negative to ensure a monotonic relationship between model ability and response accuracy, preserving interpretability of the discrimination parameter.
The intercept $b_j$ captures item easiness, serving as a baseline for the score that is independent of specific model-item interactions.

\section{Methodology}
To recover the triplet representation $({\boldsymbol\theta},{\mathbf{a}},{\mathbf{b}})$, a straightforward approach is to minimize a binary cross-entropy loss accounting for all evaluation entries
\begin{equation}\label{equ:loss}
 \ell (\mathbf{X}) =  - \sum_{(i,j)\in\Omega} \{\gamma_{ij} \log \Phi(x_{ij}) + (1-\gamma_{ij})\log(1-\Phi(x_{ij}))\},
\end{equation}
with $\mathbf{X} = \boldsymbol\theta \mathbf{a}^\top + \mathbf{1} \mathbf{b}^\top$, subject to $\mathbf{a} \geq \mathbf{0}$. Here, the scaled response variable is defined as $\gamma_{ij} = \frac{1}{2}(1+{y}_{ij})$, and $\Omega \subseteq \{(i, j) \mid 1 \le i \le N, 1 \le j \le J\}$ denotes the set of observed entries. For $(i,j)\notin\Omega$, we set $y_{ij}=0$.

\subsection{The majorization-minimization principle}
We employ the MM principle \citep{MM3,MM4} to convert the challenging minimization of $\ell (\mathbf{X})$ into a sequence of simpler surrogate problems. This is achieved through  a majorization approximation $g(\mathbf{X}\mid \tilde{\mathbf{X}})$ anchored at the current estimate $\tilde{\mathbf{X}}$, satisfying (i) a tangency condition $g(\tilde{\mathbf{X}}\mid \tilde{\mathbf{X}}) = \ell(\tilde{\mathbf{X}})$ for all $\tilde{\mathbf{X}}$ and (ii) a domination condition $g(\mathbf{X}\mid \tilde{\mathbf{X}}) \geq \ell(\mathbf{X})$ for all $\mathbf{X}$. The two conditions together ensure a monotone decrease of the objective $ \ell (\mathbf{X})$ through iterating
\begin{equation}\label{equ:iter}
\mathbf{X}^{(t+1)} = \arg\min_{\mathbf{X}} g(\mathbf{X}\mid \mathbf{X}^{(t)})
\end{equation}
for iteration step $t=0,1,\cdots$.
Notably, exact minimization of the surrogate function is not required to ensure descent of the objective. Therefore, one can inexactly solve (\ref{equ:iter}) and still guarantee a monotonic decrease.

Given the log-likelihood $\log \Phi(x)$ is $L$-Lipschitz differentiable with $L=1/4\sigma^2$, and that the density function $\phi (x) = \Phi'(x)$ is symmetric around zero, we invoke Proposition 2.1 in \cite{MMGN} to derive a quadratic majorization of $\ell (\mathbf{X})$ at $\tilde{\mathbf{X}}$, expressed as
\begin{equation}\label{equ:quad-maj}
g(\mathbf{X}\mid \tilde{\mathbf{X}}) = \frac{L}{2}\left\|\mathbf{X} - \tilde{\mathbf{Y}}\right\|_{\text{F}(\Omega)}^2 + c(\tilde{\mathbf{X}}),
\end{equation}
where $\tilde{\mathbf{Y}}=\tilde{\mathbf{X}} + 4\sigma{\mathbf{Y}}\circ\Phi(-{\mathbf{Y}}\circ\tilde{\mathbf{X}})$ under the  logit link (\ref{equ:logit}) with $\circ$ denoting the Hadamard product, and $c(\tilde{\mathbf{X}})$ is a constant depending on ${\tilde{\mathbf{X}}}$. Here $\left\| \cdot \right\|_{\text{F}(\Omega)}$ denotes the observed-entry Frobenius seminorm, defined for a matrix $\mathbf{A}$ as $\left\| \mathbf{A} \right\|_{\text{F}(\Omega)} = \sqrt{\sum_{(i, j)\in \Omega}a_{ij}^2}$.

\subsection{Constrained block-wise optimization}
By integrating the reparameterization (\ref{equ:repara}) into the quadratic majorization \eqref{equ:quad-maj}, each MM update reduces to solving a constrained matrix factorization problem
\begin{equation}\label{equ:MM-UV}
   \min_{\mathbf{U}=(\boldsymbol \theta, \boldsymbol 1), \mathbf{V}=(\mathbf{a}, \mathbf{b})} \|\mathbf{U}\mathbf{V}^\top - \tilde{\mathbf{Y}}\|_{\text{F}(\Omega)}^2, ~~\text{subject to} ~~ \mathbf{a} \geq \mathbf{0}.
\end{equation}

Observing that the objective function in (\ref{equ:MM-UV}) is quadratic in $\boldsymbol \theta$, $\mathbf{a}$, and $\mathbf{b}$, we alternately optimize over these variables. Given $\hat{\boldsymbol \theta}$ and $\hat{\mathbf{b}}$ from the previous iteration, updating $\mathbf{a}$ reduces to a standard entrywise non-negative least squares (NNLS) problem
\[\min_{\mathbf{a}\geq\mathbf{0}}\|\hat{\boldsymbol \theta}\mathbf{a}^\top - (\tilde{\mathbf{Y}}-\mathbf{1}\hat{\mathbf{b}}^\top)\|_{\text{F}(\Omega)}^2,
\]
which we solve using the active set method \citep{NNLS}. With the updated $\hat{\mathbf{a}}$, the parameter $\mathbf{b}$ admits a closed-form update $\hat{\mathbf{b}}=(\tilde{\mathbf{Y}}-\hat{\boldsymbol \theta}\hat{\mathbf{a}}^\top)^\top\mathbf{1}/N$. Subsequently, given $\hat{\mathbf{a}}$ and $\hat{\mathbf{b}}$, the update for $\boldsymbol{\theta}$ is obtained in closed form as $\hat{\boldsymbol{\theta}}= (\tilde{\mathbf{Y}}-\mathbf{1}\hat{\mathbf{b}}^\top)\hat{\mathbf{a}}/(\hat{\mathbf{a}}^{\top}\hat{\mathbf{a}})$. Detailed derivations are omitted for brevity.
We refer to the proposed computational framework as constrained block MM (\texttt{cBMM}), and summarize the full procedure in Algorithm \ref{alg:cBMM}. Per-iteration computational complexity is provided in the Appendices \ref{supp:complexity}.

\begin{algorithm}[t]
\caption{Constrained Block MM (\texttt{cBMM})}
\label{alg:cBMM}
\begin{algorithmic}[1]
\setcounter{ALG@line}{0}
    \State \underline{Input}: $\Omega$, $\mathbf{Y}$, tolerance \texttt{tol}
    \State \underline{Output}: $\boldsymbol\theta$, $\mathbf{a}$, $\mathbf{b}$
    \State Initialize $\mathbf{U} = (\mathbf{u},\mathbf{1})$, $\mathbf{V} = (\mathbf{v}_1,\mathbf{v}_2)$, set $\delta = \infty$
    \While{$\delta > \texttt{tol}$}
        \State $\tilde{\mathbf{X}} \leftarrow \mathbf{u}\mathbf{v}_1^\top + \mathbf{1}\mathbf{v}_2^\top$
        \State $\tilde{\mathbf{Y}} \leftarrow \tilde{\mathbf{X}} + 4\sigma\mathbf{Y}\circ\Phi(-\mathbf{Y}\circ\tilde{\mathbf{X}})$

        \State Compute $\mathbf{R} \leftarrow \tilde{\mathbf{Y}}-\mathbf{1}\mathbf{v}_2^\top$ and $\mathbf{v}_1 \leftarrow \arg\min_{\mathbf{v}_1\geq\mathbf{0}}\|\mathbf{u}\mathbf{v}_1^\top - \mathbf{R}\|_{\text{F}(\Omega)}^2 $

        \State Update $\mathbf{v}_2 \leftarrow (\tilde{\mathbf{Y}}-\mathbf{u}\mathbf{v}_1^\top)^\top\mathbf{1}/N$

        \State Update $\mathbf{R} \leftarrow \tilde{\mathbf{Y}}-\mathbf{1}\mathbf{v}_2^\top$ and $\mathbf{u} \leftarrow \mathbf{R}\mathbf{v}_1 (\mathbf{v}_1^\top\mathbf{v}_1)^{-1}$

        \State $\delta \leftarrow |\ell(\mathbf{u}\mathbf{v}_1^\top + \mathbf{1}\mathbf{v}_2^\top) -\ell(\tilde{\mathbf{X}})|/|\ell(\tilde{\mathbf{X}})|$
    \EndWhile
    \State \Return $\boldsymbol\theta = \mathbf{u}$,\quad $\mathbf{a} = \mathbf{v}_1$,\quad $\mathbf{b} = \mathbf{v}_2$
\end{algorithmic}
\end{algorithm}

\subsection{Identifiability}\label{subsec-iden}
As noted in \cite{irt1} and \cite{lart}, the estimation of $\boldsymbol\theta$ and $\mathbf{a}$ under reparameterization (\ref{equ:repara}) is subject to identifiability issues, such as trivial solutions and symmetries involving sign-flips and scaling, leading to multiple optima with an identical score matrix $\mathbf{X}$. To resolve these ambiguities, specific structural constraints and architectural considerations must be imposed.
We begin by providing the formal definition of identifiability.

\begin{definition}[Identifiability]\label{def:iden}
The latent probabilistic model is identifiable if for any solution $(\boldsymbol\theta^*, \mathbf{a}^*, \mathbf{b}^*)$, another suite of $(\boldsymbol\theta^\dagger, \mathbf{a}^\dagger, \mathbf{b}^\dagger)$ that induces the same distribution matrix, i.e., $\Phi(\mathbf{X}^\dagger) = \Phi(\mathbf{X}^*)$, must satisfy $(\boldsymbol\theta^\dagger, \mathbf{a}^\dagger, \mathbf{b}^\dagger)=(\boldsymbol\theta^*, \mathbf{a}^*, \mathbf{b}^*)$.
\end{definition}

Note that strict identifiability based on Definition \ref{def:iden} is not generally guaranteed, as hinted by the ambiguities mentioned previously. We show that, however, the model is identifiable up to an equivalence class, as formalized in  Definition \ref{def:eqclass}, with the underlying order structure preserved. 

\begin{definition}[Equivalence class]\label{def:eqclass}
Solution sets $(\boldsymbol\theta^*, \mathbf{a}^*, \mathbf{b}^*)$ and $(\boldsymbol\theta^\dagger, \mathbf{a}^\dagger, \mathbf{b}^\dagger)$ are within an equivalence class if  each corresponding pair, $(\boldsymbol\theta^*, \boldsymbol\theta^\dagger)$,  $(\mathbf{a}^*, \mathbf{a}^\dagger)$, and $(\mathbf{b}^*, \mathbf{b}^\dagger)$, induces the same rank ordering.
\end{definition}

With all definitions in place, we are ready to present the following theorem to characterize identifiability of (\ref{equ:repara}).
We leave the full proof to the Appendices \ref{supp:proof1}.
\begin{theorem}[Structural preservation]\label{thm:iden}
The latent probabilistic model (\ref{equ:logit}) is identifiable up to an equivalence class if $\boldsymbol\theta\neq\mathbf{c}$, where $\mathbf{c}$ is a constant vector, and  $\mathbf{a} \geq \mathbf{0}$ with $\mathbf{a} \neq \mathbf{0}$.
\end{theorem}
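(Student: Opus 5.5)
Since $\Phi$ in (\ref{equ:logit}) is strictly increasing for fixed $\sigma>0$, the condition $\Phi(\mathbf{X}^\dagger)=\Phi(\mathbf{X}^*)$ is equivalent to $\mathbf{X}^\dagger=\mathbf{X}^*$. So I would first reduce the problem to a purely algebraic one: characterize all triplets with $\boldsymbol\theta^\dagger\mathbf{a}^{\dagger\top}+\mathbf{1}\mathbf{b}^{\dagger\top}=\boldsymbol\theta^*\mathbf{a}^{*\top}+\mathbf{1}\mathbf{b}^{*\top}$. The plan is to show that every such triplet is an affine reparametrization
\[
\boldsymbol\theta^\dagger=\alpha\boldsymbol\theta^*+\beta\mathbf{1},\qquad \mathbf{a}^\dagger=\mathbf{a}^*/\alpha,\qquad \mathbf{b}^\dagger=\mathbf{b}^*-\beta\mathbf{a}^*/\alpha,
\]
with $\alpha>0$. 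Rank orderings of $\boldsymbol\theta$ and $\mathbf{a}$ are then preserved immediately.

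To derive this, I would remove the intercept by centering. Let $\mathbf{P}=\mathbf{I}-\mathbf{1}\mathbf{1}^\top/N$, so $\mathbf{P}\mathbf{1}=\mathbf{0}$. Applying $\mathbf{P}$ to both sides gives $(\mathbf{P}\boldsymbol\theta^\dagger)\mathbf{a}^{\dagger\top}=(\mathbf{P}\boldsymbol\theta^*)\mathbf{a}^{*\top}$. The hypotheses $\boldsymbol\theta^*\neq\mathbf{c}$ and $\mathbf{a}^*\neq\mathbf{0}$ make the right side a nonzero rank-one matrix. Hence the left side is also nonzero, so $\boldsymbol\theta^\dagger$ is non-constant and $\mathbf{a}^\dagger\neq\mathbf{0}$. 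Uniqueness of rank-one factors up to scale then gives $\mathbf{P}\boldsymbol\theta^\dagger=\alpha\mathbf{P}\boldsymbol\theta^*$ and $\mathbf{a}^\dagger=\mathbf{a}^*/\alpha$ for some $\alpha\neq0$.

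Nonnegativity fixes the sign. Both $\mathbf{a}^*$ and $\mathbf{a}^\dagger$ are nonnegative and nonzero, so at any index with $a^*_j>0$ we get $a^\dagger_j=a^*_j/\alpha\ge0$, forcing $\alpha>0$. This is the step where the constraint $\mathbf{a}\ge\mathbf{0}$ eliminates the sign-flip symmetry, and it is what makes the orderings preserved rather than reversed. Writing $\boldsymbol\theta^\dagger=\alpha\boldsymbol\theta^*+\beta\mathbf{1}$ and substituting back into the original identity yields $\mathbf{1}(\mathbf{b}^\dagger-\mathbf{b}^*+\beta\mathbf{a}^*/\alpha)^\top=\mathbf{0}$, which gives the stated formula for $\mathbf{b}^\dagger$.

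The main obstacle is $\mathbf{b}$. Because $\mathbf{b}^\dagger=\mathbf{b}^*-(\beta/\alpha)\mathbf{a}^*$, its ordering is not preserved for arbitrary $\beta$ unless $\mathbf{a}^*$ is constant. To close this gap I would invoke a location normalization that removes $\beta$, for instance fixing the mean of $\boldsymbol\theta$; the centered $\mathbf{b}$-update in Algorithm \ref{alg:cBMM} is natural for this. With $\beta=0$ we get $\mathbf{b}^\dagger=\mathbf{b}^*$. Alternatively, I would read the rank-order claim for $\mathbf{b}$ as referring to the invariant easiness $b_j+a_j\bar\theta$, which is the average score on item $j$ and does not depend on $(\alpha,\beta)$. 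I would state this normalization explicitly in the proof, since without it the statement fails for $\mathbf{b}$.
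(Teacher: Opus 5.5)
Your argument is correct, and it is organized differently from the paper's. The paper works column by column on $\mathbf{X}$. It takes ratios of within-column differences to get $a_1/a_2$. It reads the ordering of $\boldsymbol\theta$ off $\mathbf{X}_{,1}$ using $a_1>0$. It recovers $\mathbf{a}$ up to scale from $\mathbf{X}_{i,}-\mathbf{X}_{j,}=(\theta_i-\theta_j)\mathbf{a}$. Finally, it handles separately the case in which every other column is constant or proportional to $\mathbf{X}_{,1}$.

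Your centering by $\mathbf{P}$ plus uniqueness of rank-one factors yields the whole fiber $\{(\alpha\boldsymbol\theta^*+\beta\mathbf{1},\,\mathbf{a}^*/\alpha,\,\mathbf{b}^*-\beta\mathbf{a}^*/\alpha):\alpha>0,\ \beta\in\mathbb{R}\}$ at once, with no case split. It also derives the non-degeneracy of the second triplet rather than assuming it. And it isolates the one place where $\mathbf{a}\geq\mathbf{0}$ is needed for both triplets, namely to force $\alpha>0$; the paper uses this only implicitly.

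The explicit $\beta$ is exactly what the paper's proof misses, so your caveat about $\mathbf{b}$ is not a gap in your argument but a defect in the statement. The paper has three related errors:
\begin{itemize}
\item Its step ``solve $\mathbf{X}_{,1}-b_1\mathbf{1}=\alpha(\mathbf{X}_{,2}-b_2\mathbf{1})$ to find $b_1$ and $b_2$'' determines only $b_1-\alpha b_2$.
\item $\boldsymbol\theta$ is identified up to a positive affine map, not merely ``up to a scaling constant.''
\item The closing claim that $\boldsymbol\theta\mathbf{a}^\top$, hence $\mathbf{b}$, is uniquely determined fails whenever $\beta\neq0$.
\end{itemize}
Concretely, take $\boldsymbol\theta^*=(0,1)^\top$, $\mathbf{a}^*=(1,2)^\top$, $\mathbf{b}^*=(0,\tfrac12)^\top$ and $\boldsymbol\theta^\dagger=(1,2)^\top$, $\mathbf{a}^\dagger=(1,2)^\top$, $\mathbf{b}^\dagger=(-1,-\tfrac32)^\top$. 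Both triplets satisfy every hypothesis and give the same $\mathbf{X}$ (rows $(0,\tfrac12)$ and $(1,\tfrac52)$), yet the orderings of $\mathbf{b}$ are reversed.

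So present your result as a corrected theorem. The orderings of $\boldsymbol\theta$ and $\mathbf{a}$ are preserved. The vector $\mathbf{b}$ is determined only modulo multiples of $\mathbf{a}$. It becomes exactly determined once a location constraint such as $\bar\theta=0$ is imposed on both triplets, or once $\mathbf{b}$ is replaced by the invariant column means $b_j+a_j\bar\theta$ of $\mathbf{X}$.

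Your remark on Algorithm~\ref{alg:cBMM} can be made precise: lines 8--9 as written give $\bar\theta^{(t+1)}=\bar\theta^{(t)}$, so the iterates keep the location of the random initialization. That is a property of the algorithm, however; for the identifiability statement the normalization still has to be an explicit hypothesis.
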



\subsection{Convergence}
We establish convergence guarantees for \texttt{cBMM} under the following assumptions.

\begin{itemize}
    \item[(H1)] \textbf{Boundedness.} The iterates $\{(\boldsymbol{\theta}^{(t)}, \mathbf{a}^{(t)}, \mathbf{b}^{(t)})\}$ are contained in a compact set $\mathcal{K}$, with $\|\mathbf{W}\| \leq M$ for all $\mathbf{W} \in \mathcal{K}$ and some constant $M > 0$.
    \item[(H2)] \textbf{Non-degeneracy.} There exists a constant $c > 0$ such that $\|\boldsymbol{\theta}^{(t)}\| \geq c$ and $\|\mathbf{a}^{(t)}\| \geq c$ for all $t \geq 0$.
\end{itemize}

Assumption (H1) is standard for iterative optimization algorithms and can be enforced by adding a bounding constraint to the feasible set. Assumption (H2) excludes degenerate solutions with $\boldsymbol{\theta} = \mathbf{0}$ or $\mathbf{a} = \mathbf{0}$, which are already ruled out by the non-triviality condition in Theorem \ref{thm:iden}. Let $\mathbf{W} = (\boldsymbol{\theta}, \mathbf{a}, \mathbf{b}) \in \mathbb{R}^{N + 2J}$ denote the concatenated parameter vector, and define $F(\mathbf{W}) \equiv \ell(\boldsymbol{\theta}\mathbf{a}^\top + \mathbf{1}\mathbf{b}^\top)$ as the objective viewed as a function of $\mathbf{W}$.

With assumptions in place, we present the main convergence result in Theorem \ref{thm:conv}, established under the Kurdyka--\L{}ojasiewicz (KL) framework \citep{KL1, KL2, KL3}. The complete proof is provided in the Appendices \ref{supp:proof2}.

\begin{theorem}[Convergence of \texttt{cBMM}]\label{thm:conv}
Under Assumptions \textnormal{(H1)} and \textnormal{(H2)}, the sequence $\{(\boldsymbol{\theta}^{(t)}, \mathbf{a}^{(t)}, \mathbf{b}^{(t)})\}$ generated by Algorithm \ref{alg:cBMM} satisfies:
\begin{enumerate}
    \item[(i)] \textnormal{(Summability)} $\displaystyle\sum_{t=0}^{\infty}\|\mathbf{W}^{(t+1)} - \mathbf{W}^{(t)}\| < \infty$;
    \item[(ii)] \textnormal{(Whole-sequence convergence)} $\mathbf{W}^{(t)} \to \mathbf{W}^* = (\boldsymbol{\theta}^*, \mathbf{a}^*, \mathbf{b}^*)$ as $t \to \infty$;
    \item[(iii)] \textnormal{(Stationarity)} $\mathbf{W}^*$ satisfies the KKT conditions:
    \[
    \nabla_{\boldsymbol{\theta}}\ell(\mathbf{X}^*) = \mathbf{0}, \quad \mathbf{a}^* \geq \mathbf{0}, \quad \nabla_{\mathbf{a}}\ell(\mathbf{X}^*) \geq \mathbf{0}, \quad \mathbf{a}^* \circ \nabla_{\mathbf{a}}\ell(\mathbf{X}^*) = \mathbf{0}.
    \]
\end{enumerate}
Furthermore, when the KL exponent is $1/2$, the convergence is \textbf{linear}, i.e., there exist constants $C > 0$ and $\alpha \in (0,1)$ such that $\|\mathbf{W}^{(t)} - \mathbf{W}^*\| \leq C\alpha^t$.
\end{theorem}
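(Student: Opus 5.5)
We follow the standard Attouch--Bolte--Svaiter template for descent methods under the Kurdyka--\L{}ojasiewicz property. We verify three conditions for the sequence $\mathbf{W}^{(t)}$: sufficient decrease, a relative-error bound on a subgradient, and a continuity/KL condition at cluster points. We work with the extended objective $\Psi(\mathbf{W}) = F(\mathbf{W}) + \iota_{\{\mathbf{a}\geq\mathbf{0}\}}(\mathbf{a})$, where $\iota$ is the indicator of the constraint set. Since $\ell$ is real analytic in $\mathbf{X}$ (a sum of log-sigmoid terms) and the map $\mathbf{W}\mapsto \boldsymbol\theta\mathbf{a}^\top+\mathbf{1}\mathbf{b}^\top$ is polynomial, $F$ is real analytic. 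The nonnegative orthant is semialgebraic, so $\Psi$ is subanalytic and hence a KL function.

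\textbf{Sufficient decrease.} One outer iteration is an MM step, realized by a single sweep of exact block minimizations of the surrogate $g(\cdot\mid\mathbf{X}^{(t)})$ over $\mathbf{a}$, then $\mathbf{b}$, then $\boldsymbol\theta$. For the $\mathbf{a}$-block (NNLS), the surrogate is $\frac{L}{2}\sum_j\sum_{i:(i,j)\in\Omega}\theta_i^2 (a_j-\cdot)^2+\dots$. It is therefore strongly convex in $a_j$ with modulus $L\sum_{i:(i,j)\in\Omega}\theta_i^2$, which (H2) plus a mild coverage condition on $\Omega$ bound below by some $\mu>0$. Similarly, the $\mathbf{b}$-block has modulus $L\cdot\#\{i:(i,j)\in\Omega\}\geq L$, and the $\boldsymbol\theta$-block has modulus $L\|\mathbf{a}\|^2_{\Omega}\geq \mu$ by (H2). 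Each exact block minimization of a $\mu$-strongly convex function then decreases it by $\frac{\mu}{2}\|\Delta\|^2$. Chaining $\ell(\mathbf{X}^{(t+1)})\leq g(\mathbf{X}^{(t+1)}\mid\mathbf{X}^{(t)})$ with tangency yields
\[
F(\mathbf{W}^{(t)})-F(\mathbf{W}^{(t+1)})\geq \tfrac{\mu}{2}\|\mathbf{W}^{(t+1)}-\mathbf{W}^{(t)}\|^2 .
\]
Because $F\geq 0$, this already gives square-summability of the steps.

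\textbf{Relative error.} We write the optimality conditions of each block subproblem. For example, $\mathbf{0}\in \nabla_{\mathbf{a}} g(\mathbf{a}^{(t+1)},\mathbf{b}^{(t)},\boldsymbol\theta^{(t)}\mid \mathbf{X}^{(t)}) + N_{\geq0}(\mathbf{a}^{(t+1)})$, where $N_{\geq0}$ is the normal cone. We compare these with $\partial\Psi(\mathbf{W}^{(t+1)})$ using two facts. First, $\nabla g(\cdot\mid\tilde{\mathbf{X}})$ and $\nabla \ell$ agree at $\tilde{\mathbf{X}}$ (tangency plus domination force matching gradients). Second, both $\nabla_{\mathbf{X}} g$ and $\nabla_{\mathbf{X}}\ell$ are $L$-Lipschitz in $\mathbf{X}$. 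Combined with the chain rule through the bilinear map, which is Lipschitz on the compact set $\mathcal{K}$ from (H1) with constant depending on $M$, this produces $\mathbf{z}^{(t+1)}\in\partial\Psi(\mathbf{W}^{(t+1)})$ satisfying $\|\mathbf{z}^{(t+1)}\|\leq \rho\|\mathbf{W}^{(t+1)}-\mathbf{W}^{(t)}\|$. This bookkeeping across the three sequentially updated blocks is the step we expect to be the main obstacle. Each block's condition is evaluated at a mixed point, with the later blocks still at old values and the anchor at $\mathbf{X}^{(t)}$, and every discrepancy must be bounded by step lengths using (H1) and (H2).

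\textbf{Conclusion.} (H1) gives a nonempty compact set of cluster points on which $\Psi$ is constant, since $\Psi$ is continuous on the feasible set. The uniformized KL inequality together with the two estimates above yields (i) by the usual concavity argument on the desingularizer $\varphi(s)=cs^{1-\vartheta}$. Statement (ii) follows because a summable-increment sequence is Cauchy. For (iii), passing to the limit in the relative-error bound gives $\mathbf{0}\in\partial\Psi(\mathbf{W}^*)$; expanding the normal cone of the orthant gives exactly the stated KKT system, together with $\nabla_{\mathbf{b}}\ell=\mathbf{0}$. Finally, when $\vartheta=1/2$, the standard estimate $\Delta_{t+1}\leq C(\Delta_{t}-\Delta_{t+1})$ for the tail sums $\Delta_t=\sum_{k\geq t}\|\mathbf{W}^{(k+1)}-\mathbf{W}^{(k)}\|$ gives geometric decay of $\Delta_t$, and $\|\mathbf{W}^{(t)}-\mathbf{W}^*\|\leq\Delta_t\leq C\alpha^t$.
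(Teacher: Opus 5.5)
Your proposal is correct and follows essentially the same route as the paper. The paper's proof runs: sufficient descent from blockwise strong convexity of the surrogate plus tangency and domination (Lemma~\ref{lem:descent}); a relative-error bound from the block optimality conditions and Lipschitz gradients on the compact set (Lemma~\ref{lem:error}); then the standard KL argument giving summability, whole-sequence convergence, KKT stationarity via the closed graph of the subdifferential, and the linear rate at exponent $1/2$.

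You are in fact more careful than the paper in three places: you make the indicator of $\{\mathbf{a}\geq\mathbf{0}\}$ explicit, you do the mixed-point bookkeeping, and you do not assert that the exponent is $1/2$. Your extra coverage condition on $\Omega$ can also be avoided, as the paper's moduli $\|\boldsymbol\theta\|^2$, $N$, $\|\mathbf{a}\|^2$ implicitly do. Use the full surrogate $\frac{L}{2}\|\mathbf{X}-\tilde{\mathbf{Y}}\|_{\mathrm F}^2$ instead; it still majorizes because $y_{ij}=0$ off $\Omega$ forces $\tilde{\mathbf{Y}}=\tilde{\mathbf{X}}$ there, and its exact block minimizers in $\mathbf{b}$ and $\boldsymbol\theta$ are precisely lines 8--9 of Algorithm~\ref{alg:cBMM}.
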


\section{Experiments}\label{sec-exp}
We conduct extensive experiments using both synthetic data and real-world applications. On the one hand, simulation studies on synthetic data are used for sensitivity analysis under varying problem parameters, including the temperature, the missing pattern and rate, and the sparsity level of the discrimination parameter; Detailed results are provided in Appendices \ref{supp:sensitivity}, with key findings summarized below.
On the other hand, we evaluate \texttt{cBMM} on real-world benchmark suites to assess scalability by comparing its runtime against baseline methods, and to assess interpretability through the recovered model ability, item difficulty and discrimination.

\textbf{Baseline methods.}\quad
We compare \texttt{cBMM} with several baselines. First, we consider conventional IRT methods with two implementations: (i) \texttt{mirt} in \texttt{R} \citep{mirt}, which employs the EM algorithm for parameter estimation, and (ii) \texttt{py-irt} in Python \citep{pyirt}, which uses stochastic variational inference \citep{svi} with mini-batch gradient updates. We also include generic optimization approaches for solving (\ref{equ:loss}). Specifically, we consider Limited-memory BFGS with Box constraints (\texttt{L-BFGS-B}) \citep{LBFGSB}, a second-order quasi-Newton method with bounded constraints, and \texttt{Manopt} \citep{Manopt}, a framework for optimization on manifolds. Implementation details are provided in Appendices \ref{supp:algdetail}.

\textbf{Key findings in simulation studies.}\quad
As shown in Appendices \ref{supp:sensitivity}, simulation studies on synthetic data reveal several key findings: (i) \texttt{cBMM} demonstrates robust performance under varying temperatures, missing patterns and rates, and sparsity levels of the discrimination parameter $\mathbf{a}$, consistently achieving comparable or even superior estimation accuracy while running significantly faster (up to 200x) than competing methods.
(ii) \texttt{py-irt} \textit{produces substantially lower estimation accuracy than the other methods, and is therefore excluded from the real-world benchmark application studies.}

\textbf{Real-world benchmark suites.}\quad
We consider multiple benchmark suites with varying numbers of LLMs ($N$) and benchmark items ($J$) to assess scalability and interpretability. In total, we examine seven benchmark suites. The first, MATH-500 \citep{MATH500}, contains $J=500$ items  and is evaluated across $N=140$ LLMs; we adopt the evaluation data collected by \cite{lart}, with full model lists and prompt details provided in the Appendices \ref{supp:benchmarkdetail}. The remaining six benchmark suites are drawn from the Hugging Face Open LLM Leaderboard \citep{open-llm-leaderboard}, which includes $N=2211$ LLMs evaluated between June and December 2024 under the Leaderboard v2 schema. These suites vary in the number of benchmark items ($J$): IFEval (541), MuSR (756), GPQA ($1192$), MATH (1324),  BBH (5761),  and MMLU-Pro (12032) \citep{IFEval, MUSR, GPQA, MATH, BBH, MMLU}. We use the raw historical evaluation data obtained via the \texttt{huggingface\_hub} API as curated by \cite{wu2026}, with further details available in the Appendices \ref{supp:benchmarkdetail}.

\subsection{Scalability}\label{subsec-scale}
To demonstrate the scalability of \texttt{cBMM} compared to other baseline methods, we evaluate it on six large-scale benchmark suites from the Hugging Face Open LLM Leaderboard. For each suite, we conduct 50 replicates and report the average, minimum and maximum runtimes (in seconds) along with the average speedup ratio of \texttt{cBMM} relative to the fastest baseline.

\textbf{Significant speedup and superior scalability.}\quad
As illustrated in Figure \ref{app:runtime}, \texttt{cBMM} consistently outperforms existing baselines in computational efficiency, achieving a 41x-86x reduction in runtime. As the number of benchmark items $J$ increases, \texttt{cBMM} exhibits improved scalability compared to generic optimization baselines. For instance, scaling from the IFEval suite ($J=541$) to the MMLU-Pro suite ($J=12032$) results in an average 508x increase in runtime for \texttt{Manopt}, a 392x increase in runtime for \texttt{L-BFGS-B}, but only a 104x increase for \texttt{cBMM}. Notably, \texttt{mirt} suffers from severe numerical instability and fails to converge (indicated by `NA') on four large benchmark suites (GPQA, MATH, BBH, and MMLU-Pro), highlighting its limitations in large-scale settings. As a complement, our simulation studies in the Appendices \ref{supp:sensitivity} further demonstrate that \texttt{cBMM} can achieve up to a 200x reduction in runtime while maintaining estimation accuracy comparable to, or even exceeding, that of baseline methods.

\begin{figure}[t]
 \centering
 \includegraphics[width=\linewidth]{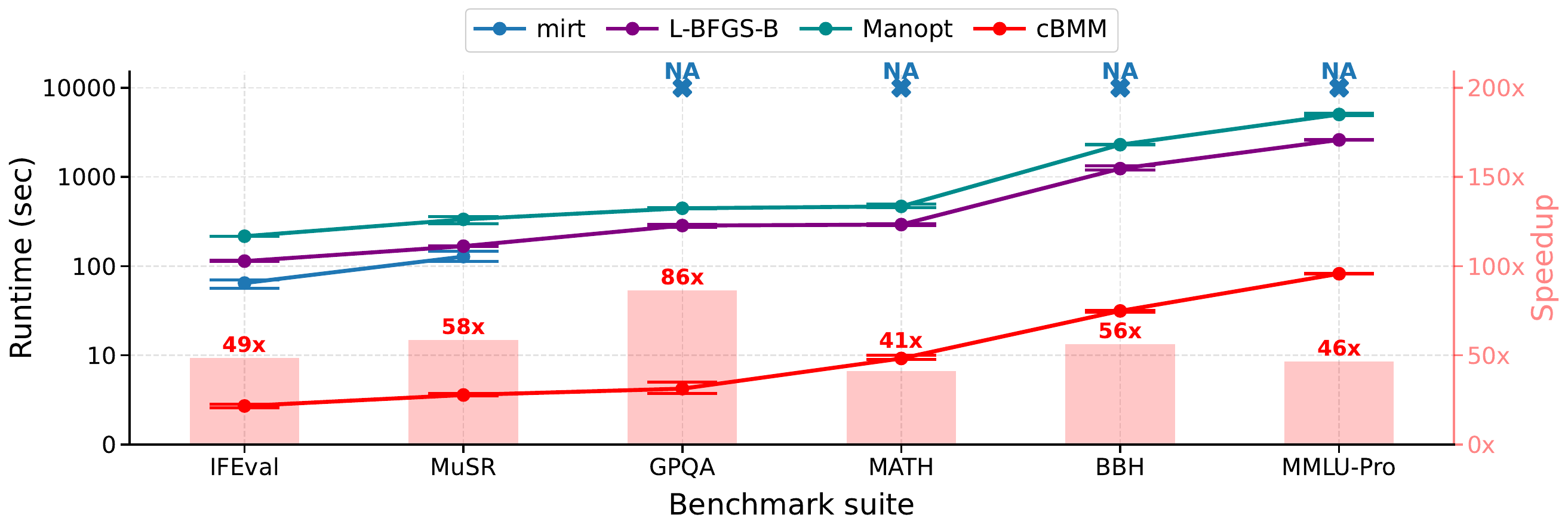}
 \caption{Scalability comparison across six large-scale benchmark suites. Runtime (sec) (left axis, points and interval markers) is shown on a logarithmic scale and reports the average runtime (points) as well as the minimum and maximum runtime (interval markers) across 50 replicates for each method. Speedup (right axis, shaded bars) represents the average speedup  of \texttt{cBMM} over the fastest baseline. Benchmark suites are ordered by increasing scale of $J$ with fixed $N=2211$. Exact speedup ratios are annotated for each suite. `NA' indicates convergence failure due to ill-conditioned data configurations in \texttt{mirt}.}\label{app:runtime}
\end{figure}

\subsection{Interpretability of model ability}
To interpret and validate the \texttt{cBMM}-recovered model abilities, we employ the MATH-500 benchmark suite, which encompasses a diverse set of LLMs released between July 2023 and September 2025 with varying model sizes. To facilitate comparison, we categorize these models into three scale tiers: small (0–7B), medium (7–20B), and large (>20B).

\textbf{Consistency with parametric scaling laws.}\quad
As shown in Figure \ref{app:math500theta}, the \texttt{cBMM}-recovered model abilities $\hat{\boldsymbol{\theta}}$ exhibit a clear upward trend over release time across all three model scale tiers, consistent with empirical expectations. In addition,  the parametric scaling effect, in line with \cite{scalinglaw}, becomes evident: larger models exhibit stronger reasoning capabilities, as reflected by higher overall placements and peak values within each tier. A notable deviation is observed for the 01-ai/Yi-34B model, which is primarily driven by the disproportionate scaling of training tokens relative to model parameters within the Yi series \citep{Yi}. This phenomenon is consistent with the Chinchilla scaling laws \citep{scalinglaw2}, which highlight the importance of balancing training data and model parameters for optimal model performance.

\begin{figure}[t]
 \centering
 \includegraphics[width=\linewidth]{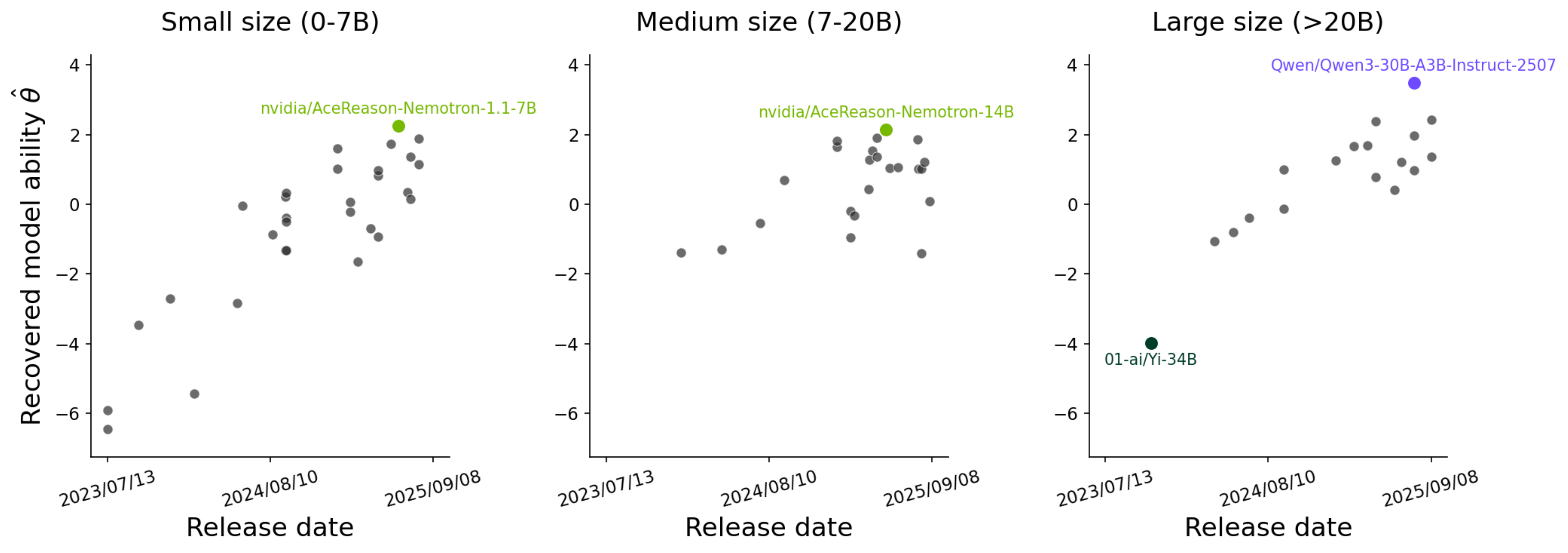}
 \caption{\texttt{cBMM}-recovered model abilities $\hat{\boldsymbol{\theta}}$ for LLMs in the MATH-500 benchmark suite. The three subplots (left, middle, right) correspond to three model scale tiers: small (0–7B), medium (7–20B), and large (>20B). All panels span the same release date range, from July 2023 to September 2025. LLMs with the highest $\hat{\boldsymbol{\theta}}$ are highlighted in each subplot, along with the LLM with the lowest $\hat{\boldsymbol{\theta}}$ in the large-size subplot.}\label{app:math500theta}
\end{figure}

\subsection{Interpretability of item difficulty}
To interpret and validate the \texttt{cBMM}-recovered item difficulties, we use the human-annotated difficulty levels for each item in the MATH-500 benchmark suite as reference and assess the effectiveness of the estimated parameters $\hat{\mathbf{b}}$ obtained from \texttt{cBMM}.

\textbf{Consistency with human-annotated difficulty.}\quad
Figure \ref{app:math500b}, with subplots corresponding to overall and subject-wise results, displays the distribution of  \texttt{cBMM}-recovered difficulty estimates ($-\hat{\mathbf{b}}$) across human-annotated difficulty levels (1–5) for the MATH-500 benchmark suite.
A generally consistent upward trend is observed across all  subplots at the median level, aligning well with the ordinal human-annotated difficulty scale. The `Overall' subplot shows increasing variability in the distribution of ($-\hat{\mathbf{b}}$)  as the difficulty level rises. We also observe several deviations from the overall increasing trend,  such as level 1 in `Geometry' and level 3 in `Intermediate Algebra'. These discrepancies likely reflect a latent misalignment between model-perceived and human-judged difficulty, particularly for more challenging items. This observation is consistent with concerns raised by \cite{relatedwork6} and \cite{relatedwork7}, which suggest that benchmark datasets may lack sensitivity and specificity due to confounding features that influence LLM-perceived difficulty in ways not readily aligned with human intuition.


\begin{figure}[t]
 \centering
 \includegraphics[width=\linewidth]{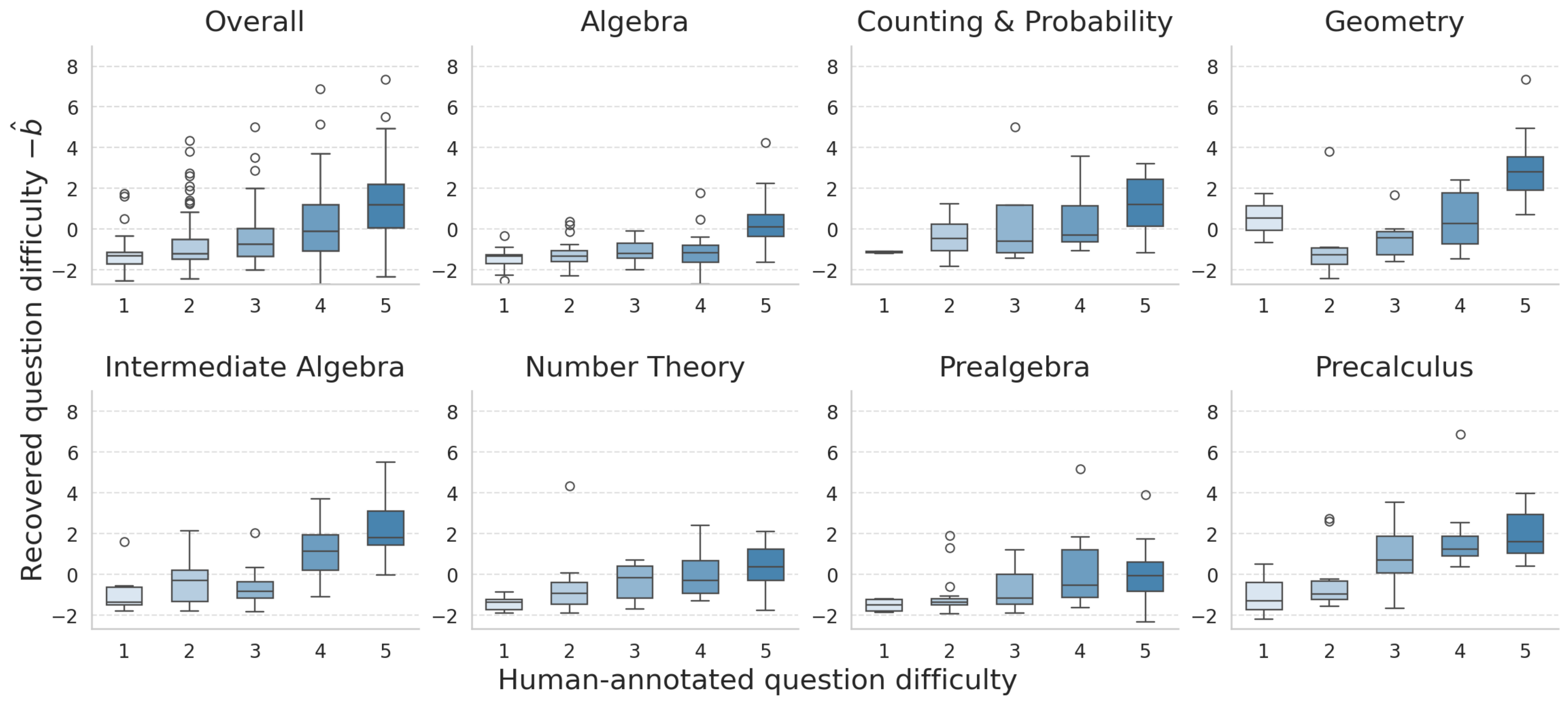}
 \caption{Distribution of \texttt{cBMM}-recovered item difficulties ($-\hat{\mathbf{b}}$) across human-annotated difficulty levels (1-5) in the MATH-500 benchmark suite, shown for the overall and subject-specific results. 
 }
 \label{app:math500b}
\end{figure}

\subsection{Interpretability of item discrimination}

Regarding item discrimination, we hypothesize that greater heterogeneity in discrimination estimates leads to lower ranking consistency between the \texttt{cBMM}-recovered model abilities and average accuracies, since average accuracy implicitly assumes equal item discrimination.
Specifically, we quantify ranking consistency using the Spearman correlation between the \texttt{cBMM}-recovered $\hat{\boldsymbol{\theta}}$ and average accuracies, a standard metric in IRT-based LLM evaluations \citep{irt3,relatedwork7}. To measure heterogeneity in item discrimination, we use the coefficient of variation (CV) of the \texttt{cBMM}-recovered $\hat{\mathbf{a}}$, defined as the ratio of the standard deviation to the mean.

\textbf{Inverse relationship between ranking consistency and discrimination heterogeneity.}\quad
Figure \ref{app:hfscatter} depicts the relationship between the Spearman correlation of \texttt{cBMM}-recovered $\hat{\boldsymbol\theta}$ and average accuracies and the CV of \texttt{cBMM}-recovered $\hat{\mathbf{a}}$, across seven benchmark suites.
To highlight different analysis scopes, we present results for the top 20\% of models selected by average accuracy (left panel) and for the full set of LLMs (right panel). A clear inverse relationship is observed, providing empirical support for our hypothesis that greater heterogeneity in item discrimination estimates leads to more pronounced ranking discrepancies. A similar trend has also been reported in \cite{irt3}.


\begin{figure}[t]
 \centering
 \includegraphics[width=\linewidth]{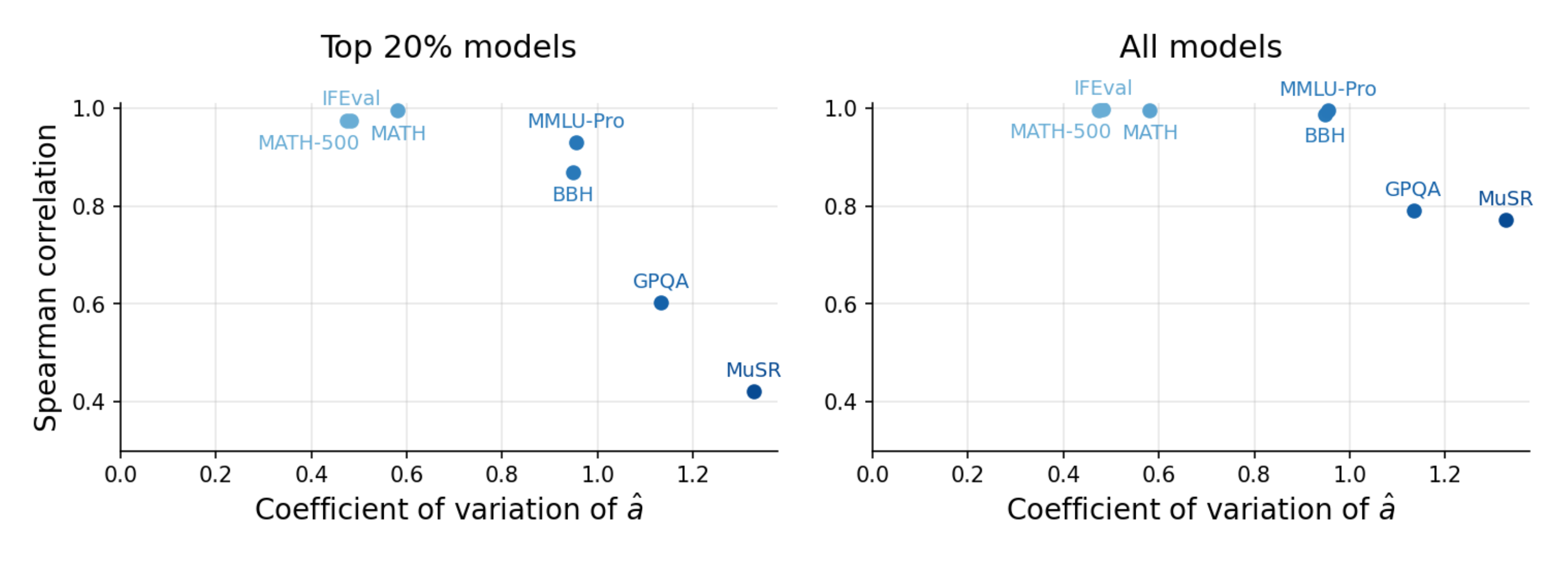}
 \caption{Spearman correlation between \texttt{cBMM}-recovered model abilities and average accuracies v.s. the coefficient of variation of  \texttt{cBMM}-recovered $\hat{\mathbf{a}}$. The left panel shows results for the top 20\% of models selected by average accuracy, while the right panel includes all LLMs. Each scatter point corresponds to a benchmark suite, as annotated in the plot.}\label{app:hfscatter}
\end{figure}

\textbf{Fine-grained ranking shifts under discrimination heterogeneity}\quad
Figure \ref{app:hfrankflow} illustrates the detailed ranking flow between average accuracy and \texttt{cBMM}-recovered ability across three representative benchmark suites (MATH, MMLU-Pro, and GPQA). We select the top 30 models for MATH, top 50 for MMLU-Pro, and top 90 for GPQA, and visualize ranking transitions for LLMs that appear in both ranking schemes; full rankings are provided in the Supplementary Material.
These ranking flows visually demonstrate that ranking discrepancies become increasingly pronounced as item discrimination grows more heterogeneous, offering a fine-grained view of ranking inconsistency. Additionally, several notable patterns emerge from the MATH benchmark: rankings for distilled LLMs from the Qwen series \citep{qwen2.5} remain largely consistent across both metrics, whereas substantial upward shifts are observed for Qwen2-Math-72B-Instruct and Qwen2.5-72B when transitioning from average accuracy to \texttt{cBMM}-recovered ability, suggesting that their capabilities may be underestimated by standard accuracy-based evaluations.

\begin{figure}[t]
 \centering
 \includegraphics[width=\linewidth]{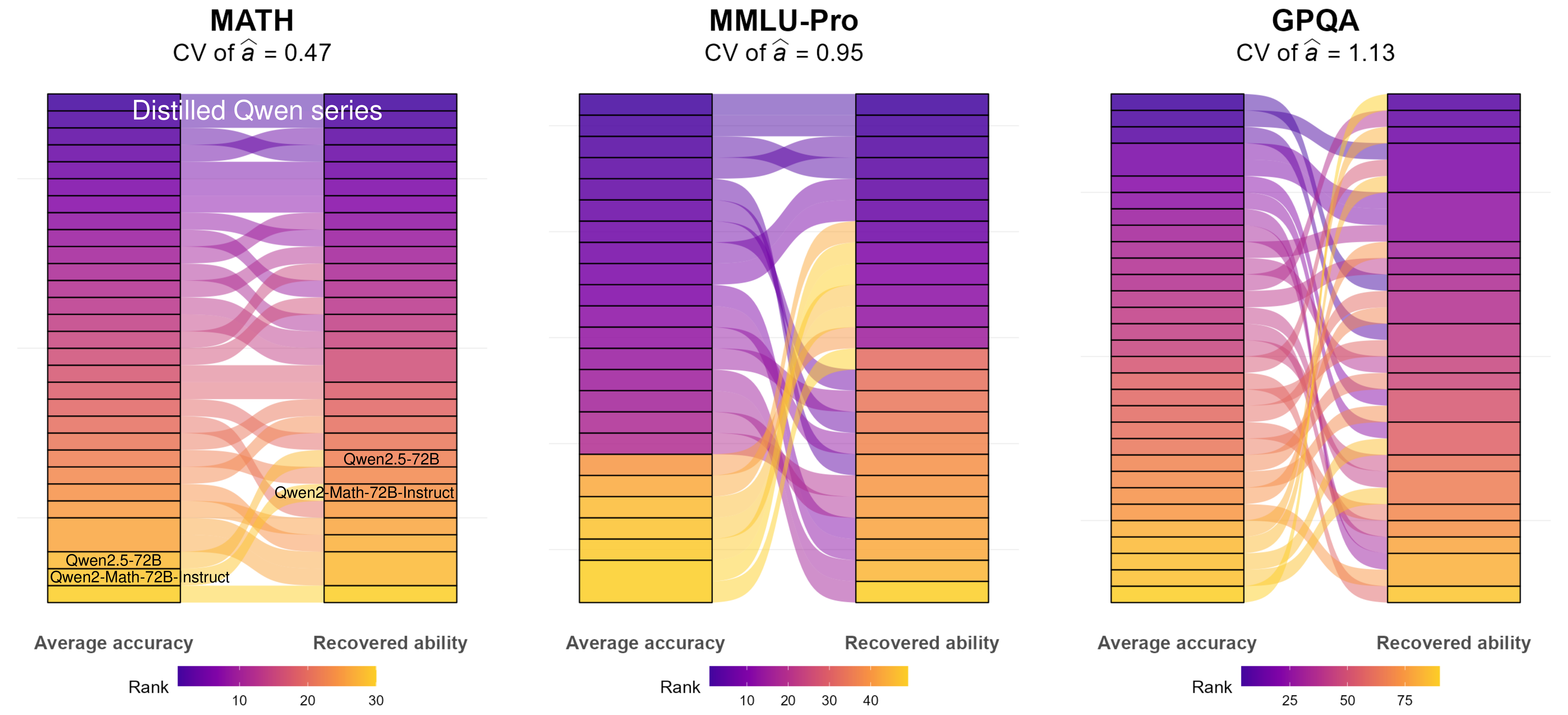}
 \caption{Ranking flow between average accuracy (left column in each panel) and \texttt{cBMM}-recovered ability (right column in each panel). The three subplots correspond to three  benchmark suites: MATH, MMLU-Pro and GPQA, ordered by increasing coefficient of variation of the recovered $\hat{\mathbf{a}}$. Only LLMs that consistently appear in the top tiers under both ranking schemes are shown; full rankings are provided in the Supplementary Material. Wider bars indicate tied rankings.}\label{app:hfrankflow}
\end{figure}



\section{Conclusion}\label{sec-conc}
This paper proposes an interpretable and scalable computational framework for evaluating LLMs, providing fine-grained assessments of model abilities and enabling large-scale real-world applications. Building on classical IRT and leveraging the MM principle, we reformulate the evaluation problem as a sequence of constrained matrix factorization problems and introduce the \texttt{cBMM} algorithm to solve it efficiently. We further establish theoretical guarantees for parameter identifiability and algorithmic convergence. Through extensive experiments on both synthetic data and large-scale real-world benchmarks, we demonstrate that \texttt{cBMM} achieves orders-of-magnitude computational speedups and improved scalability while maintaining competitive estimation accuracy relative to existing methods. Moreover, its evaluation results are consistent with established parametric scaling laws \citep{scalinglaw,scalinglaw2} and human annotations, while revealing heterogeneity in model and item behavior that is not captured by average-accuracy-based evaluation.

\textbf{Limitation and future work.}\quad
Future work is motivated by a few current limitations. First, we treat model capability as a single dimensional trait, whereas general benchmarks, such as BBH and MMLU-Pro, may reveal multi-faceted model abilities. This motivates a natural extension to multidimensional IRT frameworks. We note that a corresponding implementation is already available in our \href{https://anonymous.4open.science/r/cBMM-8E04}{GitHub repository}, although additional theoretical challenges remain, particularly regarding identifiability under rotational invariance.
Second, the logistic link in (\ref{equ:logit}) can be extended to accommodate ordinal responses, such as those derived from LLM-as-a-Judge evaluations \citep{LLMJudge}.
Third, deriving statistical error bounds for the recovered parameters would provide a principled quantification of uncertainty and strengthen the theoretical guarantees of the method.

\textbf{Impact statement.}\quad
Our findings from large-scale evaluation tasks have important implications for fairer LLM capability ranking and more principled benchmark design.
First, by accounting for LLM stochasticity and item heterogeneity, the recovered parameters capture the interaction between model abilities and item characteristics, leading to more reliable capability ranking of LLMs.
Second, sparse discrimination estimates can identify benchmark items that are non-informative for distinguishing LLM abilities, providing a data-driven signal of redundancy and guiding more efficient benchmark construction. Moreover, the recovered item difficulty parameters offer data-driven assessments that complement, and potentially augment, human annotations, enabling more automated and scalable benchmark analysis for LLM evaluation.

\clearpage
\bibliographystyle{plain}
\bibliography{bibliography}


\clearpage
\appendix
\renewcommand{\thetable}{\thesection.\arabic{table}}
\setcounter{table}{0}
\setcounter{equation}{0}
\renewcommand{\theequation}{\thesection.\arabic{equation}}
\setcounter{figure}{0}
\renewcommand{\thefigure}{\thesection.\arabic{figure}}

\section{Appendices}

\subsection{Related work}
\textbf{1-bit matrix completion.}\quad
The optimization problem underlying LLM evaluation based on IRT can be viewed as a constrained 1-bit matrix completion problem with a user-specified rank. The 1-bit matrix completion problem has been well studied in the literature \citep{hdist1,hdist2,relatedwork3, MMGN}, yet its computation remains challenging. \texttt{cBMM} builds on the MMGN framework \citep{MMGN}, a recently proposed efficient algorithm for 1-bit matrix completion based on the MM principle. \texttt{cBMM} is specifically designed for the constrained formulation arising in LLM evaluation, and reformulates the original problem as a sequence of constrained matrix factorization subproblems. This leads to closed-form updates and a simple implementation based on standard linear algebra operations. As a result, \texttt{cBMM} is substantially faster than both IRT-based approaches and generic optimization methods, including quasi-Newton and manifold optimization algorithms.

\textbf{IRT-based LLM evaluation.}\quad
Conventional LLM evaluation relies on accuracy-based benchmarking in an aggregated manner \citep{swebenchverified,GAIA}, ignoring both the stochastic nature of LLM outputs and the heterogeneity of benchmark items \citep{intro11,redun2}. Recently, methods from psychometrics, particularly Item Response Theory (IRT), have been adapted for LLM evaluation \citep{relatedwork31,relatedwork32,intro17,irt3}, using a latent probabilistic framework to model LLM ability and item characteristics. Existing studies typically build on the two-parameter IRT model, whose conventional implementations often suffer from numerical instability and computational inefficiency \citep{mirt,lart}, thereby limiting their applicability to small-scale settings. \texttt{cBMM} addresses these challenges by introducing a scalable and interpretable computational framework based on MM optimization. By ensuring both computational efficiency and stable parameter recovery, \texttt{cBMM} is well-suited for large-scale LLM evaluations.


\subsection{Proof of Theorem \ref{thm:iden}}\label{supp:proof1}

Suppose a triplet $(\boldsymbol{\theta}, \mathbf{a}, \mathbf{b})$ satisfies $\boldsymbol{\theta} \ne \mathbf{c}$ for some
constant vector $\mathbf{c}$, and that $\bf a$ is nonnegative with at least one nonzero entry
(without loss of generality, assume $a_1 > 0$). Consider a given score matrix
$\mathbf{X} = \boldsymbol\theta \mathbf{a}^\top + \mathbf{1}\mathbf{b}^\top$.
First, suppose that there exists a column $\mathbf{X}_{,j}$ that is neither a scalar multiple
of the first column $\mathbf{X}_{,1}$ nor the vector of ones. Without loss of generality, suppose
that this column is $\mathbf{X}_{,2}$. We argue that for any $i,j,k$,
\[\frac{x_{i1} - x_{j1}}{x_{j1} - x_{k1}} = \frac{x_{i2} -
x_{j2}}{x_{j2} - x_{k2}}.\] That is, the scaled differences among
entries must be consistent. This is because the expression reduces to
$\frac{(\theta_i -\theta_j)a_1}{(\theta_j -\theta_k)a_1} =
\frac{(\theta_i -\theta_j)a_2}{(\theta_j -\theta_k)a_2}$. We can now
identify $\boldsymbol{\theta}$ up to a scaling constant by first computing $\alpha
\equiv \frac{a_1}{a_2} = \frac{x_{i1} - x_{j1}}{x_{i2} - x_{j2}}$
where $x_{i1}$ and $x_{j1}$ are distinct, which is guaranteed under the assumptions. We then solve
\[\mathbf{X}_{,1} - b_1{\bf 1} = \alpha (\mathbf{X}_{,2}
- b_2{\bf1})\]
to find $b_1$ and $b_2$. This further allows us to obtain $ a_1\boldsymbol{\theta}$ through
$a_1\boldsymbol{\theta}=\mathbf{X}_{,1} - b_1{\bf 1}$. Since $a_1 > 0$, the rank ordering of $\boldsymbol{\theta}$ is uniquely determined.

Now consider any two non-identical rows $i$ and $j$ of $\mathbf{X}$. We have
$\mathbf{X}_{i,} - \mathbf{X}_{j,} = (\theta_i - \theta_j) \bf a$. Hence, $\mathbf{a}$ can be identified up to a scaling constant through
\[\frac{a_k}{a_1} = \frac{x_{ik} - x_{jk}}{x_{i1} - x_{j1}}.\]
In particular, the rank ordering of $\bf a$ is determined. Finally, because
$\boldsymbol{\theta} \mathbf{a}^\top$ is uniquely determined, $\bf b$ is also uniquely determined.

Now consider the case that every column aside from $\mathbf{X}_{,1}$ is either
a scalar multiple of $\mathbf{X}_{,1}$ or the vector of ones. First note that any
column $i$ which is a scalar multiple of the vector of ones identifies a
column where $a_i$ is zero, and the corresponding entry of $\mathbf{b}$ is given by the value in that column. For any column that is a scalar multiple of $\mathbf{X}_{,1}$, the previous arguments still hold to uniquely
identify the rank ordering for $\bf a$ and $\boldsymbol \theta$, except that $\boldsymbol b$ is no
longer uniquely determined, but rather follows the same ratios as the
corresponding columns. Nonetheless, the rank ordering of $(\boldsymbol{\theta}, \mathbf{a}, \mathbf{b})$
is uniquely determined.

\subsection{Proof of Theorem \ref{thm:conv}}\label{supp:proof2}
We begin by establishing a sufficient descent condition, which is the key step for applying the Kurdyka--\L{}ojasiewicz (KL) framework \citep{KL1, KL2, KL3}.

\begin{lemma}[Sufficient Descent]\label{lem:descent}
Under Assumptions \textnormal{(H1)} and \textnormal{(H2)}, there exists a constant $\rho > 0$ such that for all $t \geq 0$,
\[
F(\mathbf{W}^{(t)}) - F(\mathbf{W}^{(t+1)}) \geq \rho \|\mathbf{W}^{(t+1)} - \mathbf{W}^{(t)}\|^2.
\]
\end{lemma}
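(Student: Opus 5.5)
We chain the MM majorization with the exact block minimizations of the surrogate. Write $\mathbf{X}^{(t)}=\boldsymbol\theta^{(t)}\mathbf{a}^{(t)\top}+\mathbf{1}\mathbf{b}^{(t)\top}$ and define the surrogate as a function of the parameters,
\[
G(\mathbf{W}\mid\mathbf{W}^{(t)}) \equiv g(\boldsymbol\theta\mathbf{a}^\top+\mathbf{1}\mathbf{b}^\top\mid \mathbf{X}^{(t)}) = \tfrac{L}{2}\|\boldsymbol\theta\mathbf{a}^\top+\mathbf{1}\mathbf{b}^\top-\tilde{\mathbf{Y}}^{(t)}\|_{\text{F}(\Omega)}^2 + c(\mathbf{X}^{(t)}).
\]
One pass of Algorithm \ref{alg:cBMM} performs three exact block minimizations of $G(\cdot\mid\mathbf{W}^{(t)})$, in order:
\[
\mathbf{W}^{(t)}=(\boldsymbol\theta^{(t)},\mathbf{a}^{(t)},\mathbf{b}^{(t)})\to(\boldsymbol\theta^{(t)},\mathbf{a}^{(t+1)},\mathbf{b}^{(t)})\to(\boldsymbol\theta^{(t)},\mathbf{a}^{(t+1)},\mathbf{b}^{(t+1)})\to\mathbf{W}^{(t+1)}.
\]
The first is the NNLS update of $\mathbf{a}$, the second the least-squares update of $\mathbf{b}$, and the third the least-squares update of $\boldsymbol\theta$. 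Tangency and domination give
\[
F(\mathbf{W}^{(t)})=G(\mathbf{W}^{(t)}\mid\mathbf{W}^{(t)})\quad\text{and}\quad F(\mathbf{W}^{(t+1)})\le G(\mathbf{W}^{(t+1)}\mid\mathbf{W}^{(t)}).
\]
It therefore suffices to show that the surrogate decreases by at least a quadratic amount along each block step, and then sum the three decreases.

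For each block, the surrogate restricted to that block is a convex quadratic, with the constraint $\mathbf{a}\ge\mathbf{0}$ being a convex set. For a strongly convex function $q$ with modulus $\mu$ minimized over a convex set at $z^+$, the optimality condition gives
\[
q(z)-q(z^+)\ge \tfrac{\mu}{2}\|z-z^+\|^2
\]
for every feasible $z$. Applying this with $z$ the previous block value yields $\tfrac{\mu_{\mathbf a}}{2}\|\Delta\mathbf a\|^2$, $\tfrac{\mu_{\mathbf b}}{2}\|\Delta\mathbf b\|^2$ and $\tfrac{\mu_{\boldsymbol\theta}}{2}\|\Delta\boldsymbol\theta\|^2$. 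The moduli are read off from the Hessians.
\begin{itemize}
\item In $\mathbf{b}$, the Hessian is $L\,\mathrm{diag}(|\Omega_{\cdot j}|)$, so $\mu_{\mathbf b}\ge L$ provided every item has at least one observation. The update $\mathbf{b}=(\cdot)^\top\mathbf 1/N$ implicitly treats the data as fully observed, so the same argument applies with $|\Omega_{\cdot j}|=N$.
\item In $\mathbf{a}$, the Hessian is $L\,\mathrm{diag}_j\big(\sum_{i:(i,j)\in\Omega}\theta_i^2\big)$. Under full observation this is $L\|\boldsymbol\theta^{(t)}\|^2 I\succeq Lc^2 I$ by (H2).
\item In $\boldsymbol\theta$, the Hessian is $L\|\mathbf a^{(t+1)}\|^2 I\succeq Lc^2I$, again by (H2).
\end{itemize}
Summing the three decreases and chaining with the MM inequalities gives the lemma with $\rho=\tfrac{L}{2}\min(1,c^2)$. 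Under missing data, $\rho$ would instead be built from per-row and per-column restricted norms. (H1) is not needed for descent itself, but it keeps the constants uniform and is used later for the KL step.

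The main obstacle is the non-degeneracy step. Strong convexity in $\mathbf{a}$ and $\boldsymbol\theta$ needs uniform lower bounds on $\|\boldsymbol\theta^{(t)}\|$ and $\|\mathbf a^{(t+1)}\|$, and (H2) must be invoked at the correct intermediate iterate: $\mathbf a^{(t+1)}$ is an iterate index, so (H2) applies to it. With partially observed $\Omega$, the entrywise restricted sums $\sum_{i:(i,j)\in\Omega}\theta_i^2$ could vanish even when $\|\boldsymbol\theta\|\ge c$. I would handle this either by strengthening (H2) to these restricted norms, or by using the closed-form updates as the paper does, which treat unobserved entries as filled by $\tilde{\mathbf Y}$ and so make the problem effectively fully observed.
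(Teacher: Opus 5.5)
Your proposal is correct and follows essentially the same route as the paper: tangency and domination of the quadratic surrogate, plus a strong-convexity decrease for each of the three exact block updates, summed over the blocks. The moduli come from $\|\boldsymbol\theta^{(t)}\|^2$, $N$, and $\|\mathbf{a}^{(t+1)}\|^2$, with (H2) invoked at $\mathbf{a}^{(t+1)}$; your constant $\rho=\tfrac{L}{2}\min(1,c^2)$ is actually more careful than the paper's $\rho_0=\min(c^2,N)$, which omits the factor $L/2$ carried by $g$, and you make explicit the full-observation assumption that the paper's moduli use silently.
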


\begin{proof}
Fix iteration $t$. Let $\tilde{\mathbf{X}}^{(t)} = \boldsymbol{\theta}^{(t)}(\mathbf{a}^{(t)})^\top + \mathbf{1}(\mathbf{b}^{(t)})^\top$ and $\tilde{\mathbf{Y}}^{(t)} = \tilde{\mathbf{X}}^{(t)} + 4\sigma\mathbf{Y} \circ \Phi(-\mathbf{Y} \circ \tilde{\mathbf{X}}^{(t)})$ as defined in Algorithm \ref{alg:cBMM}. Define the surrogate function at step $t$ as $G(\mathbf{W} \mid \mathbf{W}^{(t)}) \equiv g(\boldsymbol{\theta}\mathbf{a}^\top + \mathbf{1}\mathbf{b}^\top \mid \tilde{\mathbf{X}}^{(t)})$.

We analyze each block update in Algorithm \ref{alg:cBMM}.

\textit{Step 1 (Update $\mathbf{a}$, line 7).} With $\boldsymbol{\theta}^{(t)}$ and $\mathbf{b}^{(t)}$ fixed, the sub-objective reduces to $\sum_j \|\theta^{(t)} a_j - R_{1,\cdot j}\|^2$, which is strongly convex in $\mathbf{a}$ with modulus $\mu_{\mathbf{a}} = \|\boldsymbol{\theta}^{(t)}\|^2 \geq c^2$ by Assumption (H2). By the strong convexity optimality inequality,
\begin{equation}\label{equ:descent_a}
G(\mathbf{a}^{(t)} \mid \mathbf{W}^{(t)}) - G(\mathbf{a}^{(t+1)} \mid \mathbf{W}^{(t)}) \geq c^2\|\mathbf{a}^{(t+1)} - \mathbf{a}^{(t)}\|^2.
\end{equation}

\textit{Step 2 (Update $\mathbf{b}$, line 8).} With $\boldsymbol{\theta}^{(t)}$ and $\mathbf{a}^{(t+1)}$ fixed, the sub-objective is strongly convex in $\mathbf{b}$ with modulus $\mu_{\mathbf{b}} = N$. Hence,
\begin{equation}\label{equ:descent_b}
G(\mathbf{b}^{(t)} \mid \mathbf{W}^{(t)}) - G(\mathbf{b}^{(t+1)} \mid \mathbf{W}^{(t)}) \geq N\|\mathbf{b}^{(t+1)} - \mathbf{b}^{(t)}\|^2.
\end{equation}

\textit{Step 3 (Update $\boldsymbol{\theta}$, line 9).} With $\mathbf{a}^{(t+1)}$ and $\mathbf{b}^{(t+1)}$ fixed, the sub-objective is strongly convex in $\boldsymbol{\theta}$ with modulus $\mu_{\boldsymbol{\theta}} = \|\mathbf{a}^{(t+1)}\|^2 \geq c^2$ by Assumption (H2). Hence,
\begin{equation}\label{equ:descent_theta}
G(\boldsymbol{\theta}^{(t)} \mid \mathbf{W}^{(t)}) - G(\boldsymbol{\theta}^{(t+1)} \mid \mathbf{W}^{(t)}) \geq c^2\|\boldsymbol{\theta}^{(t+1)} - \boldsymbol{\theta}^{(t)}\|^2.
\end{equation}

Summing \eqref{equ:descent_a}--\eqref{equ:descent_theta} and letting $\rho_0 = \min(c^2, N) > 0$,
\[
G(\mathbf{W}^{(t)} \mid \mathbf{W}^{(t)}) - G(\mathbf{W}^{(t+1)} \mid \mathbf{W}^{(t)}) \geq \rho_0\|\mathbf{W}^{(t+1)} - \mathbf{W}^{(t)}\|^2.
\]
Applying the tangency condition $G(\mathbf{W}^{(t)} \mid \mathbf{W}^{(t)}) = F(\mathbf{W}^{(t)})$ and the domination condition $G(\mathbf{W}^{(t+1)} \mid \mathbf{W}^{(t)}) \geq F(\mathbf{W}^{(t+1)})$ yields the result with $\rho = \rho_0$.
\end{proof}

\begin{lemma}[Relative Error Condition]\label{lem:error}
Under Assumptions \textnormal{(H1)} and \textnormal{(H2)}, there exists a constant $\lambda > 0$ such that for all $t \geq 0$, there exists $\mathbf{s}^{(t+1)} \in \partial F(\mathbf{W}^{(t+1)})$ satisfying
\[
\|\mathbf{s}^{(t+1)}\| \leq \lambda\|\mathbf{W}^{(t+1)} - \mathbf{W}^{(t)}\|.
\]
\end{lemma}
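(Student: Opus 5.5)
The plan is to build $\mathbf{s}^{(t+1)}$ from the first-order optimality conditions of the three block subproblems solved at iteration $t$. We then compare these conditions with the true gradient of $F$ at $\mathbf{W}^{(t+1)}$. Since $F$ is smooth on $\mathbb{R}^{N+2J}$ and the constraint $\mathbf{a}\ge\mathbf{0}$ enters through the indicator $\iota_{\{\mathbf{a}\ge 0\}}$, we work with the limiting subdifferential
\[
\partial (F+\iota)(\mathbf{W}) = \nabla F(\mathbf{W}) + \{\mathbf{0}\}\times N_{\{\mathbf{a}\ge 0\}}(\mathbf{a})\times\{\mathbf{0}\},
\]
which is the object the KL framework needs.

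\textbf{Step 1: optimality conditions.} Write $G(\mathbf{W}\mid\mathbf{W}^{(t)})=\frac{L}{2}\|\boldsymbol\theta\mathbf{a}^\top+\mathbf{1}\mathbf{b}^\top-\tilde{\mathbf{Y}}^{(t)}\|^2_{\text{F}(\Omega)}+c$. The three block updates give the following conditions.
\begin{itemize}
\item The $\mathbf{a}$-update gives $\mathbf{0}\in\nabla_{\mathbf{a}}G(\boldsymbol\theta^{(t)},\mathbf{a}^{(t+1)},\mathbf{b}^{(t)}\mid\mathbf{W}^{(t)})+N_{\{\mathbf{a}\ge0\}}(\mathbf{a}^{(t+1)})$. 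Call the normal vector $-\mathbf{g}_a$.
\item The $\mathbf{b}$-update gives $\nabla_{\mathbf{b}}G(\boldsymbol\theta^{(t)},\mathbf{a}^{(t+1)},\mathbf{b}^{(t+1)}\mid\mathbf{W}^{(t)})=\mathbf{0}$.
\item The $\boldsymbol\theta$-update gives $\nabla_{\boldsymbol\theta}G(\boldsymbol\theta^{(t+1)},\mathbf{a}^{(t+1)},\mathbf{b}^{(t+1)}\mid\mathbf{W}^{(t)})=\mathbf{0}$.
\end{itemize}
We then define
\[
\mathbf{s}^{(t+1)}=\nabla F(\mathbf{W}^{(t+1)})+(\mathbf{0},-\mathbf{g}_a,\mathbf{0}).
\]
This lies in $\partial F(\mathbf{W}^{(t+1)})$, because the normal cone at $\mathbf{a}^{(t+1)}$ is the one used in the $\mathbf{a}$-update. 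Each block of $\mathbf{s}^{(t+1)}$ therefore equals the gradient of $F$ at $\mathbf{W}^{(t+1)}$ minus the gradient of $G(\cdot\mid\mathbf{W}^{(t)})$ evaluated at the partially updated point used in that block.

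\textbf{Step 2: split each difference.} Each block difference is split into two pieces.

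(a) The first piece is $\nabla F(\mathbf{W}^{(t+1)})-\nabla_{\mathbf{W}}G(\mathbf{W}^{(t+1)}\mid\mathbf{W}^{(t)})$. Both functions are compositions with the same bilinear map $\mathbf{W}\mapsto\mathbf{X}$. By the chain rule the difference equals $J(\mathbf{W}^{(t+1)})^\top\big(\nabla\ell(\mathbf{X}^{(t+1)})-\nabla_{\mathbf{X}} g(\mathbf{X}^{(t+1)}\mid\tilde{\mathbf{X}}^{(t)})\big)$, where $J$ is the Jacobian of that map. Tangency of the majorizer gives $\nabla g(\tilde{\mathbf{X}}^{(t)}\mid\tilde{\mathbf{X}}^{(t)})=\nabla\ell(\tilde{\mathbf{X}}^{(t)})$. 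Both $\nabla\ell$ and $\nabla g$ are Lipschitz in $\mathbf{X}$ with constant $L$. Hence the inner difference is at most $2L\|\mathbf{X}^{(t+1)}-\tilde{\mathbf{X}}^{(t)}\|_{\text{F}}$.

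(b) The second piece is the difference between $\nabla G$ at $\mathbf{W}^{(t+1)}$ and at the partial points $(\boldsymbol\theta^{(t)},\mathbf{a}^{(t+1)},\mathbf{b}^{(t)})$ and $(\boldsymbol\theta^{(t)},\mathbf{a}^{(t+1)},\mathbf{b}^{(t+1)})$. Since $G$ is a polynomial of degree four in $\mathbf{W}$, $\nabla G$ is Lipschitz on the compact set $\mathcal{K}$ of (H1). The Lipschitz constant depends on $M$, $N$, $J$, $L$, and a bound on $\tilde{\mathbf{Y}}^{(t)}$. That bound holds because $|\tilde y_{ij}|\le |\tilde x_{ij}|+4\sigma$ and $\tilde{\mathbf{X}}^{(t)}$ is bounded on $\mathcal{K}$. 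So piece (b) is at most a constant times $\|\mathbf{W}^{(t+1)}-\mathbf{W}^{(t)}\|$.

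\textbf{Step 3: bound the Jacobian and $\mathbf{X}$-difference on $\mathcal{K}$.} Two bounds remain.
\begin{itemize}
\item $\|J(\mathbf{W})\|$ is bounded by a constant multiple of $M+\sqrt N$ on $\mathcal{K}$.
\item The map $\mathbf{W}\mapsto\mathbf{X}$ is Lipschitz on $\mathcal{K}$, since $\mathbf{X}^{(t+1)}-\tilde{\mathbf{X}}^{(t)}=\boldsymbol\theta^{(t+1)}(\mathbf{a}^{(t+1)}-\mathbf{a}^{(t)})^\top+(\boldsymbol\theta^{(t+1)}-\boldsymbol\theta^{(t)})\mathbf{a}^{(t)\top}+\mathbf{1}(\mathbf{b}^{(t+1)}-\mathbf{b}^{(t)})^\top$. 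This gives $\|\mathbf{X}^{(t+1)}-\tilde{\mathbf{X}}^{(t)}\|_{\text{F}}\le (2M+\sqrt N)\|\mathbf{W}^{(t+1)}-\mathbf{W}^{(t)}\|$.
\end{itemize}
Collecting the constants yields $\lambda$. Assumption (H2) is not needed here. It enters only through Lemma \ref{lem:descent} and the well-posedness of the $\boldsymbol\theta$-update, which requires $\mathbf{a}^{(t+1)}\neq\mathbf{0}$.

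The main obstacle is the $\mathbf{a}$-block. There the multiplier $\mathbf{g}_a$ is attached to the partial point $(\boldsymbol\theta^{(t)},\mathbf{a}^{(t+1)},\mathbf{b}^{(t)})$ rather than to $\mathbf{W}^{(t+1)}$. One must check that the normal-cone element at $\mathbf{a}^{(t+1)}$ remains a valid subgradient component at $\mathbf{W}^{(t+1)}$. It does, because the $\mathbf{a}$-coordinate is unchanged after line 7, so the same normal cone applies. One must also check that the leftover gradient mismatch is controlled by Lipschitz continuity in $(\boldsymbol\theta,\mathbf{b})$ alone. This bookkeeping, together with keeping all constants uniform in $t$ using only (H1), is the delicate part.
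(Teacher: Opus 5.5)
Your proposal is correct and follows the paper's route. Like the paper, you build $\mathbf{s}^{(t+1)}$ from the block optimality conditions (zero gradients for $\boldsymbol\theta$ and $\mathbf b$, the NNLS multiplier for $\mathbf a$), then control $\nabla F-\nabla G$ through the $L$-Lipschitz gradients and the bilinear bound on $\|\mathbf{X}^{(t+1)}-\mathbf{X}^{(t)}\|_{\text{F}}$ under (H1). Your piece (b) is a genuine improvement: the paper asserts the $\mathbf a$- and $\mathbf b$-optimality conditions directly at $\mathbf W^{(t+1)}$, although they hold only at the intermediate Gauss--Seidel points, and your Lipschitz bound on $\nabla G$ closes that gap. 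Apply that bound on the ball of radius $\sqrt{3}M$, which contains the mixed points, since these need not lie in $\mathcal K$.
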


\begin{proof}
We bound each component of the subgradient separately.

For $\boldsymbol{\theta}^{(t+1)}$ and $\mathbf{b}^{(t+1)}$, the corresponding block updates (lines 8--9 of Algorithm \ref{alg:cBMM}) are unconstrained least squares problems with exact closed-form solutions, so their first-order optimality conditions give $\nabla_{\boldsymbol{\theta}} G(\mathbf{W}^{(t+1)} \mid \mathbf{W}^{(t)}) = \mathbf{0}$ and $\nabla_{\mathbf{b}} G(\mathbf{W}^{(t+1)} \mid \mathbf{W}^{(t)}) = \mathbf{0}$. By the $L$-Lipschitz differentiability of $\ell$ and the double-linear bound $\|\mathbf{X}^{(t+1)} - \mathbf{X}^{(t)}\|_{\mathrm{F}} \leq M\|\mathbf{W}^{(t+1)} - \mathbf{W}^{(t)}\|$ under Assumption (H1),
\[
\|\nabla_{\boldsymbol{\theta}} F(\mathbf{W}^{(t+1)})\| \leq LM^2\|\mathbf{W}^{(t+1)} - \mathbf{W}^{(t)}\|,
\]
and similarly for $\nabla_{\mathbf{b}} F(\mathbf{W}^{(t+1)})$.

For $\mathbf{a}^{(t+1)}$, the NNLS update (line 7) satisfies the KKT conditions: there exists $\boldsymbol{\mu} \geq \mathbf{0}$ with $\nabla_{\mathbf{a}} G(\mathbf{W}^{(t+1)} \mid \mathbf{W}^{(t)}) = \boldsymbol{\mu}$ and $\boldsymbol{\mu} \circ \mathbf{a}^{(t+1)} = \mathbf{0}$. Setting $\mathbf{s}_{\mathbf{a}}^{(t+1)} = \nabla_{\mathbf{a}} F(\mathbf{W}^{(t+1)}) - \boldsymbol{\mu} \in \partial_{\mathbf{a}} F(\mathbf{W}^{(t+1)})$ and applying the same Lipschitz argument yields $\|\mathbf{s}_{\mathbf{a}}^{(t+1)}\| \leq LM^2\|\mathbf{W}^{(t+1)} - \mathbf{W}^{(t)}\|$.

Combining all three components and setting $\lambda = 3LM^2$ completes the proof.
\end{proof}

With these two lemmas in place, we now restate the main convergence theorem and proceed to its full proof.
\begin{supptheorem}[Convergence of \texttt{cBMM}]
Under Assumptions \textnormal{(H1)} and \textnormal{(H2)}, the sequence $\{(\boldsymbol{\theta}^{(t)}, \mathbf{a}^{(t)}, \mathbf{b}^{(t)})\}$ generated by Algorithm \ref{alg:cBMM} satisfies:
\begin{enumerate}
    \item[(i)] \textnormal{(Summability)} $\displaystyle\sum_{t=0}^{\infty}\|\mathbf{W}^{(t+1)} - \mathbf{W}^{(t)}\| < \infty$;
    \item[(ii)] \textnormal{(Whole-sequence convergence)} $\mathbf{W}^{(t)} \to \mathbf{W}^* = (\boldsymbol{\theta}^*, \mathbf{a}^*, \mathbf{b}^*)$ as $t \to \infty$;
    \item[(iii)] \textnormal{(Stationarity)} $\mathbf{W}^*$ satisfies the KKT conditions:
    \[
    \nabla_{\boldsymbol{\theta}}\ell(\mathbf{X}^*) = \mathbf{0}, \quad \mathbf{a}^* \geq \mathbf{0}, \quad \nabla_{\mathbf{a}}\ell(\mathbf{X}^*) \geq \mathbf{0}, \quad \mathbf{a}^* \circ \nabla_{\mathbf{a}}\ell(\mathbf{X}^*) = \mathbf{0}.
    \]
\end{enumerate}
Furthermore, when the KL exponent is $1/2$, the convergence is \textbf{linear}, i.e., there exist constants $C > 0$ and $\alpha \in (0,1)$ such that $\|\mathbf{W}^{(t)} - \mathbf{W}^*\| \leq C\alpha^t$.
\end{supptheorem}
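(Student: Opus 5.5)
The plan is the standard Attouch--Bolte--Svaiter argument for KL functions, using Lemma \ref{lem:descent} and Lemma \ref{lem:error} as the two hypotheses. The objective is $\Psi(\mathbf{W}) = F(\mathbf{W}) + \iota_{\{\mathbf{a}\ge \mathbf{0}\}}(\mathbf{a})$, where $\iota$ is the indicator of the nonnegative orthant. I would first check that $\Psi$ is a KL function. $F$ is the composition of the real-analytic logistic loss with a polynomial map $\mathbf{W}\mapsto \boldsymbol\theta\mathbf{a}^\top + \mathbf{1}\mathbf{b}^\top$, so it is real-analytic. The orthant indicator is semialgebraic. The sum is therefore definable in an o-minimal structure (globally subanalytic on the compact set $\mathcal{K}$ from (H1)), and it satisfies the KL inequality at every point. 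Since the subgradient $\mathbf{s}^{(t+1)}$ in Lemma \ref{lem:error} already contains the multiplier $\boldsymbol\mu$ of the nonnegativity constraint, I would read it as an element of $\partial\Psi(\mathbf{W}^{(t+1)})$.

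\emph{Preliminary facts.} By Lemma \ref{lem:descent}, $F(\mathbf{W}^{(t)})$ is nonincreasing. It is bounded below by $0$, being a cross-entropy, so it converges to some $F^*$. Summing the descent inequality gives $\sum_t\|\mathbf{W}^{(t+1)}-\mathbf{W}^{(t)}\|^2<\infty$, hence $\|\mathbf{W}^{(t+1)}-\mathbf{W}^{(t)}\|\to 0$. Let $\omega$ be the set of limit points. By (H1) it is nonempty and compact, and it is contained in $\{\mathbf{a}\ge\mathbf{0}\}$. Continuity of $F$ gives $F\equiv F^*$ on $\omega$. Lemma \ref{lem:error} together with closedness of the limiting subdifferential gives $\mathbf{0}\in\partial\Psi(\bar{\mathbf{W}})$ for every $\bar{\mathbf{W}}\in\omega$. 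Writing out that inclusion yields exactly the KKT system in (iii): $\nabla_{\boldsymbol\theta}\ell=\mathbf{0}$, $\nabla_{\mathbf{b}}\ell=\mathbf{0}$, and $\nabla_{\mathbf{a}}\ell\in -N_{\ge0}(\mathbf{a}^*)$. The last condition is equivalent to $\nabla_{\mathbf{a}}\ell\ge \mathbf{0}$ together with complementarity. If $F(\mathbf{W}^{(t)})=F^*$ at some finite $t$, descent forces the iterates to be stationary from then on, so I may assume $F(\mathbf{W}^{(t)})>F^*$ for all $t$.

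\emph{Main step: finite length.} I would invoke the uniformized KL property on the compact set $\omega$. There exist $\epsilon,\eta>0$ and a concave desingularizer $\varphi$ such that, for all large $t$,
\[
\varphi'\bigl(F(\mathbf{W}^{(t)})-F^*\bigr)\,\mathrm{dist}\bigl(\mathbf{0},\partial\Psi(\mathbf{W}^{(t)})\bigr)\ge 1 .
\]
Write $\Delta_t=\varphi(F(\mathbf{W}^{(t)})-F^*)-\varphi(F(\mathbf{W}^{(t+1)})-F^*)$. Concavity of $\varphi$, Lemma \ref{lem:descent} and Lemma \ref{lem:error} give
\[
\Delta_t\ge \frac{\rho\|\mathbf{W}^{(t+1)}-\mathbf{W}^{(t)}\|^2}{\lambda\|\mathbf{W}^{(t)}-\mathbf{W}^{(t-1)}\|}.
\]
Then $2\sqrt{ab}\le a+b$ yields
\[
2\|\mathbf{W}^{(t+1)}-\mathbf{W}^{(t)}\|\le \|\mathbf{W}^{(t)}-\mathbf{W}^{(t-1)}\|+\tfrac{\lambda}{\rho}\Delta_t .
\]
Summing telescopes the $\Delta_t$ terms and gives (i). Finite length makes the sequence Cauchy, which gives (ii). The limit lies in $\omega$, so it satisfies (iii).

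\emph{Linear rate.} Take $\varphi(s)=Cs^{1/2}$ and set $S_t=\sum_{k\ge t}\|\mathbf{W}^{(k+1)}-\mathbf{W}^{(k)}\|$. Combining the KL inequality with the relative-error bound gives $\varphi(F(\mathbf{W}^{(t)})-F^*)\le C'(S_{t-1}-S_t)$. Feeding this into the summed inequality above yields $S_t\le K(S_{t-1}-S_t)$. Hence $S_t\le \frac{K}{1+K}S_{t-1}$, and $\|\mathbf{W}^{(t)}-\mathbf{W}^*\|\le S_t\le C\alpha^t$.

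The step I expect to be the main obstacle is legitimizing the use of Lemma \ref{lem:error}, that is, making sure $\mathbf{s}^{(t+1)}$ is a genuine element of $\partial\Psi$ at the new iterate. The blocks are updated sequentially (first $\mathbf{a}$ at the old $\boldsymbol\theta$, then $\boldsymbol\theta$), so the first-order conditions for the $\mathbf{a}$-block hold at mixed points rather than at $\mathbf{W}^{(t+1)}$. The correction terms have to be absorbed through the Lipschitz bound on $\nabla\ell$ and the bilinear bound from (H1). If that bookkeeping fails, I would take the lemma as stated, since the paper provides it. The KL property itself is routine via the definability argument above.
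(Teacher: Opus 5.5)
Your proposal is correct and follows essentially the same route as the paper: Lemma~\ref{lem:descent} and Lemma~\ref{lem:error} feed the Attouch--Bolte--Svaiter KL argument, through the same bound $\Delta_t \ge \rho\|\mathbf{W}^{(t+1)}-\mathbf{W}^{(t)}\|^2/(\lambda\|\mathbf{W}^{(t)}-\mathbf{W}^{(t-1)}\|)$ with AM--GM telescoping, Cauchy convergence, closed-graph stationarity, and the linear rate from exponent $1/2$. Your refinements tighten steps the paper states loosely: you work with $\Psi = F + \iota_{\{\mathbf{a}\ge\mathbf{0}\}}$ so that $\nabla_{\mathbf{a}}F - \boldsymbol\mu$ is a genuine subgradient, you use the uniformized KL property on the limit set rather than a single neighborhood, and you derive the rate explicitly; your mixed-point concern about Lemma~\ref{lem:error} is indeed resolvable via the Lipschitz and bilinear bounds, as you suggest.
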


\begin{proof}
\textit{Step 1: Monotone decrease.} By Lemma \ref{lem:descent}, $\{F(\mathbf{W}^{(t)})\}$ is monotonically non-increasing. Since $\ell(\mathbf{X}) \geq 0$ for all $\mathbf{X}$, the sequence converges to a finite limit $F^* \geq 0$.

\textit{Step 2: KL property.} The function $F(\mathbf{W}) = \ell(\boldsymbol{\theta}\mathbf{a}^\top + \mathbf{1}\mathbf{b}^\top)$ is a composition of the logistic loss with a bilinear map. Since the logistic loss is sub-analytic and the bilinear map is polynomial, $F$ is sub-analytic and hence belongs to an o-minimal structure \citep{KL2}. Therefore $F$ satisfies the KL property at every critical point, with KL exponent being $1/2$.

\textit{Step 3: Summability via KL inequality.} By Assumption (H1), there exists a subsequence $\mathbf{W}^{(t_k)} \to \mathbf{W}^*$ with $F(\mathbf{W}^*) = F^*$. By the KL property, there exist $\eta > 0$, a neighborhood $\mathcal{U}$ of $\mathbf{W}^*$, and a concave function $\varphi: [0,\eta) \to \mathbb{R}_+$ with $\varphi(0) = 0$ and $\varphi' > 0$, such that for all $\mathbf{W} \in \mathcal{U}$ with $F^* < F(\mathbf{W}) < F^* + \eta$,
\[
\varphi'(F(\mathbf{W}) - F^*)\|\mathbf{s}\| \geq 1, \quad \forall \mathbf{s} \in \partial F(\mathbf{W}).
\]
For sufficiently large $t$, let $\Delta_t = \varphi(F(\mathbf{W}^{(t)}) - F^*) - \varphi(F(\mathbf{W}^{(t+1)}) - F^*)$. By concavity of $\varphi$, Lemma \ref{lem:descent}, the KL inequality, and Lemma \ref{lem:error},
\[
\Delta_t \geq \varphi'(F(\mathbf{W}^{(t)}) - F^*) \cdot \rho\|\mathbf{W}^{(t+1)} - \mathbf{W}^{(t)}\|^2 \geq \frac{\rho}{\lambda} \cdot \frac{\|\mathbf{W}^{(t+1)} - \mathbf{W}^{(t)}\|^2}{\|\mathbf{W}^{(t)} - \mathbf{W}^{(t-1)}\|}.
\]
Applying the AM-GM inequality $a^2/b \geq 2a - b$ and summing over $t = T, \ldots, T'$ yields a telescoping bound:
\[
\sum_{t=T}^{T'}\|\mathbf{W}^{(t+1)} - \mathbf{W}^{(t)}\| \leq \frac{\lambda}{2\rho}\varphi(F(\mathbf{W}^{(T)}) - F^*) + \frac{1}{2}\|\mathbf{W}^{(T)} - \mathbf{W}^{(T-1)}\| < \infty.
\]
Letting $T' \to \infty$ establishes (i).

\textit{Step 4: Whole-sequence convergence and stationarity.} Part (i) implies $\{\mathbf{W}^{(t)}\}$ is a Cauchy sequence, so $\mathbf{W}^{(t)} \to \mathbf{W}^*$, establishing (ii). By Lemma \ref{lem:error}, $\|\mathbf{s}^{(t+1)}\| \to 0$. Since $\partial F$ has a closed graph, passing to the limit gives $\mathbf{0} \in \partial F(\mathbf{W}^*)$, which is equivalent to the KKT conditions in (iii). The linear convergence rate follows from the KL exponent of $1/2$ \citep{KL3}.
\end{proof}

\subsection{Per-iteration computational complexity}\label{supp:complexity}
The computation bottleneck of Algorithm \ref{alg:cBMM} lies in evaluating a matrix of the form $(\mathbf{uv}_1^{\top})_{\Omega}$, where $(\cdot)_{\Omega}$ is the masking operator that preserves entries $(i, j) \in \Omega$ and sets all others to zero.
Note that the active set method \citep{NNLS} used to solve
\[\arg\min_{\mathbf{v}_1\geq\mathbf{0}}\|\mathbf{u}\mathbf{v}_1^\top - \mathbf{R}\|_{\text{F}(\Omega)}^2\]
in line 7 also requires computing a matrix of the form $(\mathbf{uv}_1^{\top})_{\Omega}$ at each of its iterations.
Naively computing $\mathbf{uv}_1^{\top}$ and then masking out entries in $\Omega^c$ costs $O(NJ)$ flops. However, this complexity can be reduced to $O(|\Omega|)$ by directly evaluating only the observed entries, which is a substantial improvement when $|\Omega| \ll NJ$. We detail this construction below.

Let $\mathbf{A} \equiv (\mathbf{uv}_1^{\top})_{\Omega}$ and let $\Omega_i$ denote the set of indices $j$ such that $(i, j) \in \Omega$. Since we only need to compute entries $j \in \Omega_i$ for the $i$-th row of $\mathbf{A}$, we denote this nonzero subset as $\mathbf{A}(i, \Omega_i)$ and observe that
\[\mathbf{A}(i, \Omega_i) = {u}_i\mathbf{v}_{1,\Omega_i},\]
where $\mathbf{v}_{1,\Omega_i}$ is the sub-vector of $\mathbf{v}_1$ containing only the entries indexed by $\Omega_i$. Consequently, computing the $i$-th row of $\mathbf{A}$ incurs a cost of $O(|\Omega_i|)$ flops. Summing over all $N$ rows yields a total computational complexity of $O(|\Omega|)$ flops per-iteration.

\subsection{Algorithmic details}\label{supp:algdetail}
\textbf{EM algorithm for \texttt{mirt}.}\quad
We implement by setting itemtype as 2PL in the \texttt{mirt} function of \texttt{R} \citep{mirt}. Furthermore,
in the M-step, we specify \texttt{L-BFGS-B} optimizer \citep{LBFGSB} embedded in the \texttt{R} function to ensure non-negativity constraint. Subsequently, Expected A Posteriori (EAP) method is employed to derive the final estimates of $\boldsymbol\theta$ post-convergence.

\textbf{Post-hoc proximal gradient for \texttt{Manopt}.}\quad
The optimization is carried out via conjugate gradient over the product space $(\boldsymbol\theta, \mathbf{a}, \mathbf{b})\in \mathbb{R}^N \times  \mathbb{R}^J_+ \times \mathbb{R}^J$, and incorporates a post-hoc proximal gradient step to enforce the constraint $\mathbf{a}\geq0$. Specifically, if denoting $(\hat{\boldsymbol\theta},\hat{\mathbf{a}},\hat{\mathbf{b}})$ as the optimum from vanilla \texttt{Manopt}, it projects
\[\begin{cases}
\begin{aligned}
\tilde{\mathbf{b}}^{(i+1)} &= \tilde{\mathbf{b}}^{(i)} - \eta \nabla_{\tilde{\mathbf{b}}^{(i)}} \ell(\tilde{\mathbf{X}}^{(i)}) \\
\tilde{\mathbf{a}}^{(i+1)} &= \text{proj}_{\mathbb{R}_+} \left(\tilde{\mathbf{a}}^{(i)} - \eta (\nabla_{\tilde{\mathbf{a}}^{(i)}} \ell(\tilde{\mathbf{X}}^{(i)}) + \lambda) \right)
\end{aligned}
\end{cases}\]
with initial $(\tilde{\mathbf{a}}^{(1)},\tilde{\mathbf{b}}^{(1)}) = (\hat{\mathbf{a}},\hat{\mathbf{b}})$, where $\text{proj}_{\mathbb{R}_+}(x) = \max(0, x)$ is the proximal operator to non-negative orthant. $\eta$, $\lambda$ control the step size and sparsity level. The final output then becomes $(\hat{\boldsymbol\theta},\tilde{\mathbf{a}},\tilde{\mathbf{b}})$.

\textbf{Hyperparameters.}\quad
In \texttt{mirt}, the latent model trait follows a prior distribution of $\theta_i\overset{\text{i.i.d.}}{\sim} \mathrm{N}(0,1)$, with initial values for item parameters  $\log a_j\overset{\text{i.i.d.}}{\sim}  \mathrm{N}(0,1)$ and $b_j\overset{\text{i.i.d.}}{\sim} \mathrm{N}(0,1)$. The same prior for $\boldsymbol\theta$ is carried over during EAP. Similarly, for \texttt{cBMM}, \texttt{L-BFGS-B} and \texttt{Manopt}, the related parameters $\theta_i$, $\log a_j$, and $b_j$ are initialized using the same distributions.
All algorithms are set to a maximum of 1,000 iterations/epochs with a convergence tolerance of $10^{-4}$. For \texttt{py-irt}, we adhere to default configurations as in the package except the epoch limit. Computations are executed on Intel Xeon E5-2683 v4 processors, where each replicate is assigned to one dedicated CPU core with 64 GB of allocated RAM.

\subsection{Simulated studies for sensitivity analysis}\label{supp:sensitivity}
We reexamine the computational scalability of different methods using synthetic examples, by varying the evaluation scale, defined by the number of models $N$ and benchmark items $J$.
Additionally, to demonstrate the robustness of \texttt{cBMM}, we conduct sensitivity analysis by varying several  factors during synthetic data generation, including the temperature $\sigma$, the missing rate within $\mathbf{Y}$, $\rho=1-|\Omega|/NJ$ under different missingness patterns, and the redundancy level in benchmark items, as reflected by the sparsity level of $\mathbf{a}$, defined as $s=1-\|\mathbf{a}\|_0/J$. Here, $a_j = 0$ indicates that the $j$-th item lacks discriminative power for distinguishing model performance and is therefore considered redundant.

\textbf{Data generation process.}\quad
The default data generation process follows 50 replicates for each design. The response $y_{ij}$ is generated under logit link (\ref{equ:logit}) based on the underlying score $x_{ij}$ reparameterized by (\ref{equ:repara}), with $\theta_i\overset{\text{i.i.d.}}{\sim} \mathrm{N}(0,1)$ for $i=1,...,N$, $a_j\overset{\text{i.i.d.}}{\sim} \text{uniform}(0.5,1)$ and $b_j\overset{\text{i.i.d.}}{\sim} \mathrm{N}(0,0.5)$ for $j=1,...,J$, which follows the same data generation procedure in \cite{lart}. Unless otherwise specified, default parameters are set to $\sigma=1, \rho=0, s=0, N=1000,$ and $J=1000$.

\textbf{Performance metrics.}\quad
We evaluate the performance of the compared methods using the following metrics. (i) The Spearman correlation coefficient between the estimate and the ground truth for $\boldsymbol\theta$ and $\mathbf{a}$. (ii) The Root Mean Square Error (RMSE) of the estimate for $\mathbf{b}$. (iii) The relative error of the score matrix, defined as $\|\mathbf{X}-\hat{\mathbf{X}}\|_{\text{F}}^2 / \|\mathbf{X}\|_{\text{F}}^2$. (iv) The Hellinger distance between the estimated and true distributions, $\Phi(\hat{\mathbf{X}})$ and $\Phi(\mathbf{X})$, which is defined as  \citep{hdist2, hdist1}
\[
\frac{1}{NJ}\sum_{i,j}d_\text{H}^2(\Phi(\hat{x}_{ij}),\Phi(x_{ij})),
\]
where $d_\text{H}^2(p,q) = (\sqrt{p}-\sqrt{q})^2 + (\sqrt{1-p}-\sqrt{1-q})^2$. The Hellinger distance is a standard metric between two distributions. It is non-negative and equals zero when the two distributions are identical, which provides a different perspective on estimation accuracy compared to the relative error. (v) Runtime in seconds.

\subsubsection{Evaluation size}
To account for variations in the scale of models and benchmark items across different leaderboards \citep{open-llm-leaderboard,matharena}, we reexamine the computational scalability   of different methods by synthetic examples. We evaluate two scenarios: fixed $N$ and fixed $J$. In both cases, we vary the other dimension over $\{200,500,1000,2000,5000\}$.

Figure \ref{exp:matsizefixN} and \ref{exp:matsizefixJ} present results under fixed $N$ and fixed $J$ settings, respectively. Notably, we observe that \texttt{py-irt} exhibits substantially worse estimation performance than the other methods. This is evidenced by
lower Spearman correlations between the estimated item discrimination parameters $\hat{\mathbf{a}}$ and their ground truth values, higher RMSE for the estimated item easiness parameters $\hat{\mathbf{b}}$, and higher relative errors in the recovered score matrix $\hat{\mathbf{X}}$, along with substantially longer runtimes under the same iteration budget. We therefore include \texttt{py-irt} for completeness only and do not regard it as a primary baseline for comparison.


We also observe algorithmic instability in \texttt{mirt}. When $J\in\{2000,5000\}$, the \texttt{mirt} solver produces `NA' results due to  singularities in response patterns, a common issue in the psychometrics literature  \citep{irtNA,mirt}.
This may occur, for instance, when items exhibit near-constant responses with minimal variance across respondents.
Furthermore, when two items induce nearly identical response patterns, singularities can also arise when implementing \texttt{mirt}.
Other factors, such as extreme temperatures, specific missing patterns, or item redundancy, further exacerbate these numerical instabilities, as demonstrated in Section \ref{supp:temperature}, \ref{supp:missing} and \ref{supp:redun}. Overall, this highlights the limitations of \texttt{mirt} in large-scale and complex settings.

Regarding the recovery of the score matrix $\mathbf{X}$, both evaluation scale settings exhibit a consistent decrease in relative error as the number of models or benchmark questions increases. However, the recovery behaviors of $\boldsymbol{\theta}$, $\mathbf{a}$, and $\mathbf{b}$ differ across the two scenarios. Fixing $N$ while increasing $J$ leads to more accurate recovery of model abilities $\boldsymbol{\theta}$, but generally provides limited improvement in estimating item features $\mathbf{a}$ and $\mathbf{b}$, since only the evaluation coverage of each model is expanded through additional benchmark items. Conversely, fixing $J$ while increasing $N$ improves estimation of the item parameters, whereas the accuracy of model ability estimates remains relatively stable because the underlying benchmark set is unchanged. These findings are consistent with established observations in psychometric IRT literature \citep{irt1,irtNA,irt2}.

Combined with the runtime analysis, \texttt{cBMM} produces results that are highly consistent with empirical expectations while substantially reducing computational overhead, highlighting its strong scalability and practical applicability.



\begin{figure}[t]
 \centering
 \includegraphics[width=\linewidth]{figures/mat_size_fixN.pdf}
 \caption{Scalability comparison under varying numbers of items $J$. The number of models is fixed at $N=1000$, while $J$ increases from 200 to 5000. Each boxplot summarizes 50 independent replicates. Runtimes exceeding 1,000 seconds are truncated for better visualization. `NA' indicates convergence failure due to ill-conditioned data configurations in \texttt{mirt}.}\label{exp:matsizefixN}
\end{figure}
\begin{figure}[t]
 \centering
 \includegraphics[width=\linewidth]{figures/mat_size_fixJ.pdf}
 \caption{Scalability comparison under varying numbers of LLMs $N$. The number of benchmark items is fixed at $J=1000$, while $N$ increases from 200 to 5000. Each boxplot summarizes 50 independent replicates. Runtimes exceeding 1,000 seconds are truncated for better visualization.}\label{exp:matsizefixJ}
\end{figure}

\subsubsection{Temperature}\label{supp:temperature}
The temperature $\sigma$ acts as a pivotal scaling parameter for signal processing, with demonstrated efficacy in confidence calibration \citep{intro16,temp2} and contrastive multi-modal alignment \citep{temp3,temp4}. We therefore investigate the sensitivity of evaluation outcomes under different values of $\sigma$.

\begin{figure}[t]
 \centering
 \includegraphics[width=\linewidth]{figures/noise_level.pdf}
 \caption{Sensitivity analysis under varying temperature $\sigma$. Each boxplot summarizes 50 independent replicates. Runtimes exceeding 1,000 seconds are truncated for better visualization. `NA' indicates convergence failure due to  ill-conditioned data configurations in \texttt{mirt}.}\label{exp:noise}
\end{figure}

Figure \ref{exp:noise} illustrates performances of different methods  across varying settings of $\sigma$ where $\log_{10}(\sigma)\in\{-0.75,-0.5,-0.25,0,0.25,0.5,0.75\}$.
We again observe algorithmic instability in \texttt{mirt}: when $\log_{10}(\sigma)\in\{0.5,0.75\}$,  it  produces `NA' results due to singularities in response patterns, whereas \texttt{cBMM} remains numerically stable.
Furthermore, \texttt{cBMM} demonstrates enhanced robustness across varying temperature levels. As evidenced in Figure \ref{exp:noise}, in terms of RMSE for $\mathbf{b}$, relative error and Hellinger distance for $\mathbf{X}$, all methods exhibit performance degradation when the temperature is either too low or too high compared with moderate setting, which echoes the discussion in \cite{hdist2} regarding ill-posed recovery problem under extreme temperatures. Nevertheless, \texttt{cBMM} maintains consistently low variability across different values of $\sigma$, highlighting its robustness relative to competing methods. In contrast, the performance of \texttt{mirt} and \texttt{py-irt} follows a pronounced U-shaped pattern, with substantial fluctuations in the recovery of $\mathbf{b}$ and $\mathbf{X}$. Moreover, \texttt{py-irt} suffers from severe degradation in item discrimination estimation as temperature increases. These results further highlight the limitations of these methods in LLM evaluation tasks.

Compared with the two generic solvers  \texttt{L-BFGS-B} and \texttt{Manopt}, \texttt{cBMM} exhibits comparable recovery performance while reducing runtime by one to two orders of magnitude (up to  200x speedup), depending on the temperature. This advantage is consistent across all temperature settings, underscoring the superior computational efficiency of \texttt{cBMM}.

\subsubsection{Missingness}\label{supp:missing}
In this section, we analyze multiple missingness patterns: Missing Completely At Random (MCAR), Missing At Random (MAR) and Missing Not At Random (MNAR), with missing rates ($\rho = 1 - |\Omega|/(NJ)$). These patterns are designed to mimic real-world scenarios such as interfacing latency, generalization failures of LLMs on domain-specific items \citep{missing2}, or benchmark items triggering the safety protocols of participant models due to implicit unsafe flags \citep{missing3}.

\textbf{Missing Completely At Random.}\quad
Interfacing latency, exceeded rate limits, or safety refusals often incur empty responses \citep{missing1,lart}, resulting in missing values in the response matrix $\mathbf{Y}$. Assuming an MCAR pattern that aligns with the external nature of these aforementioned failures, we vary the missing rate $\rho$ from 0 to 0.8.

\begin{figure}[t]
 \centering
 \includegraphics[width=\linewidth]{figures/missing_rate.pdf}
 \caption{Sensitivity analysis under varying missing rate $\rho$ with an MCAR missingness pattern. Each boxplot summarizes 50 independent replicates. Runtimes exceeding 1,000 seconds are truncated for better visualization.}\label{exp:missing}
\end{figure}

As illustrated in Figure \ref{exp:missing}, \texttt{py-irt} consistently underperforms across nearly all recovered parameters, and is included as a baseline only for completeness. The other methods achieve comparable estimation accuracy across all metrics, and, as expected, their performance deteriorates as the missing rate increases.
Notably, \texttt{cBMM} emerges as the most computationally efficient approach across all missing rates, achieving up to a 50x speedup over other methods.

\textbf{Missing At Random.}\quad
We design structural missing patterns based on the spatial positions of rows and columns to reflect potential generalization failures of LLMs on specific items. Specifically, we define a spatial metric $Sp_{ij}$ as the average of the normalized row and column indices:
\[Sp_{ij} = \frac{r_i + c_j}{2}, \quad \text{where}~r_i = \frac{i-1}{N-1},~c_j = \frac{j-1}{J-1}.\]
The probability of entry $(i, j)$ being missing is then determined by the logistic function
\[\Phi(\alpha + \beta \cdot {Sp}_{ij}) = \frac{1}{1 + \exp(-\alpha - \beta \cdot Sp_{ij})}\]
following equation (\ref{equ:logit}), where $\beta=2$ and $\alpha$ is calibrated to achieve a target overall missing rate $\rho$. This construction induces a graded missingness pattern, where the probability of missing entries increases with larger row or column indices.

The results are summarized in Figure \ref{supp:missing1}. We observe a pattern consistent with Figure \ref{exp:missing}, where \texttt{cBMM} achieves comparable ranking recovery performance for $\boldsymbol{\theta}$ and $\mathbf{a}$, as well as comparable estimation accuracy for $\mathbf{b}$ and $\mathbf{X}$, while maintaining superior computational efficiency.

\begin{figure}[t]
 \centering
 \includegraphics[width=\linewidth]{figures/missing_rate2.pdf}
 \caption{Sensitivity analysis under varying missing rate $\rho$ with an MAR missingness pattern. Each boxplot summarizes 50 independent replicates. Runtimes exceeding 1,000 seconds are truncated for better visualization.}\label{supp:missing1}
\end{figure}

\textbf{Missing Not At Random.}\quad
To simulate a MNAR pattern, we implement a masking mechanism for -1 entries (incorrect responses) in $\mathbf{Y}$. Specifically, we introduce a gate variable $G_j$ for each column $j$, where
$G_j \overset{\text{i.i.d.}}{\sim} \text{Bernoulli}(\rho_c)$,
and $\rho_c$ represents the column-wise missing rate. If $G_j = 0$, no entries in column $j$ are missing, while if $G_j = 1$, all -1 entries within that column are masked.
This setting mimics scenarios where LLMs fail to generalize on domain-specific items or where benchmark items trigger safety mechanisms in participant models.

The results are summarized in Figure \ref{supp:missing2}. Notably, \texttt{mirt} consistently returns `NA', even under less severe missingness patterns, due to numerical singularities induced by complex missing structures. In contrast, \texttt{cBMM} maintains high rank coherence of the recovered parameters and high estimation accuracy of latent scores, while achieving the shortest runtime among all methods.
\begin{figure}[t]
 \centering
 \includegraphics[width=\linewidth]{figures/missing_rate3.pdf}
 \caption{Sensitivity analysis for missing rate $\rho$ under MNAR pattern. Each boxplot summarizes 50 independent replicates. Runtimes exceeding 1,000 seconds are truncated for better visualization. `NA' indicates convergence failure due to ill-conditioned data configurations in \texttt{mirt}.}\label{supp:missing2}
\end{figure}

\subsubsection{Benchmark redundancy}\label{supp:redun}
As benchmarks evolve, we expect to see increasing item redundancy across participant models \citep{redun1}, which may stem from information leakage in training data over time \citep{redun2}. Such benchmark items provide limited discriminative power and should be phased out in future evaluations. A representative example is the evolution from SWE-bench \citep{swebench} to SWE-bench Verified \citep{swebenchverified}, where a more discriminative subset of benchmark items was identified through extensive human annotation and verification. In contrast, we approach this problem from a data-driven perspective by identifying sparse discrimination parameters $\mathbf{a}$, thereby reducing reliance on manual curation and human labor.

We use the sparsity level of $\mathbf{a}$ to reflect the redundancy level in the benchmark and consider $s = 1 - \|\mathbf{a}\|_0/J \in \{0.05, 0.1, 0.2, 0.3, 0.4, 0.5\}$ to validate method performance under varying levels of item redundancy, where the zero mask is assigned completely at random.

The results are summarized in Figure \ref{exp:sparsity}. \texttt{py-irt} again exhibits significantly inferior recovery performance across all metrics. \texttt{mirt} reports `NA' at $s \in \{0.4, 0.5\}$, as the singularity issue becomes more severe when benchmark items are increasingly redundant. In contrast, \texttt{cBMM} demonstrates consistently accurate parameter recovery while maintaining superior computational efficiency.

We further dissect the result by focusing  on zero and non-zero patterns. In line with traditional hypothesis testing frameworks, we interpret the recovery of non-zero parameters as rejecting the null signal. Intuitively, $a_j = 0$ indicates that the $j$-th item has no discriminative power for distinguishing LLM abilities and should be identified and removed in subsequent iterations.
We additionally consider two metrics to evaluate the accuracy of this discovery:
\[\text{Recall (a)}=\frac{TP}{TP+FN},\quad \text{Precision (a)}= \frac{TP}{TP+FP},\]
where $TP=|\{j:\hat{a}_j\neq0,a_j\neq0\}|$, $FN=|\{j:\hat{a}_j=0,a_j\neq0\}|$ and $FP=|\{j:\hat{a}_j\neq0,a_j=0\}|$.
Interestingly, all methods tend to over-reject the null: while this leads to a recall of 1, it fails to control the false discovery rate. This issue is particularly pronounced for \texttt{py-irt}, which fails to provide meaningful sparse recovery or redundancy identification. These results suggest that developing principled error-control mechanisms is an important direction for future work.

\begin{figure}[t]
 \centering
 \includegraphics[width=\linewidth]{figures/a_sparsity.pdf}
 \caption{Sparsity recovery analysis for benchmark redundancy detection. Each boxplot summarizes 50 independent replicates. Runtimes exceeding 1,000 seconds are truncated for better visualization. `NA' indicates convergence failure due to ill-conditioned data configurations in \texttt{mirt}.}\label{exp:sparsity}
\end{figure}

\subsection{Recovery stability}
To validate the robustness of \texttt{cBMM} to initialization, we conduct experiments under various random initialization strategies following \cite{irt3} on six real-world benchmark suites from the Open LLM Leaderboard. We run 50 replicates for each configuration to analyze the inter- and intra-run Spearman correlations. Specifically, for all $i \in \{1, \dots, N\}$ and $j \in \{1, \dots, J\}$, we initialize $\theta_i \overset{\text{i.i.d.}}{\sim} \text{N}(0,1)$, $\log a_j \overset{\text{i.i.d.}}{\sim} \text{N}(0,\sigma^2_{a})$, and $b_j \overset{\text{i.i.d.}}{\sim} \text{N}(0,\sigma^2_{a})$, where $\sigma^2_{a} \in \{1, 2, 3, 4, 5\}$. We find that the recovered model and item parameters exhibit consistently high correlations across different seeds and initialization settings, as shown in Table \ref{tab:stable1}, \ref{tab:stable2}, \ref{tab:stable3}, suggesting that \texttt{cBMM} yields stable and reliable parameter recovery.

\begin{table}[t]
\centering
\caption{Stability check for \texttt{cBMM}-recovered $\hat{\boldsymbol\theta}$. Each block corresponds to a specific benchmark suite. Cells report the mean and standard deviation (in parenthesis) of Spearman correlation coefficients across 50 replicates.}
\label{tab:stable1}
\tiny
\begin{tabular}{@{}ccccccccccc@{}}
\toprule
$\hat{\boldsymbol\theta}$   & \multicolumn{5}{c}{IFEval}                                                                                                                                                                                                                                                                                                    & \multicolumn{5}{c}{BBH}                                                                                                                                                                                                                                                                                                     \\ \midrule
$\sigma^2_{a}$ & 1                                                             & 2                                                             & 3                                                              & 4                                                              & 5                                                           & 1                                                             & 2                                                             & 3                                                             & 4                                                             & 5                                                           \\
1   & \begin{tabular}[c]{@{}c@{}}1\\      (2.55e-17)\end{tabular}   &                                                               &                                                                &                                                                &                                                             & \begin{tabular}[c]{@{}c@{}}1\\      (5.70e-17)\end{tabular}   &                                                               &                                                               &                                                               &                                                             \\
2   & \begin{tabular}[c]{@{}c@{}}0.99\\      (9.58e-7)\end{tabular} & 1                                                             &                                                                &                                                                &                                                             & \begin{tabular}[c]{@{}c@{}}0.99\\      (1.35e-7)\end{tabular} & \begin{tabular}[c]{@{}c@{}}1\\      (5.09e-17)\end{tabular}   &                                                               &                                                               &                                                             \\
3   & \begin{tabular}[c]{@{}c@{}}0.99\\      (2.07e-6)\end{tabular} & 1                                                             & 1                                                              &                                                                &                                                             & \begin{tabular}[c]{@{}c@{}}0.99\\      (1.42e-7)\end{tabular} & \begin{tabular}[c]{@{}c@{}}0.99\\      (3.02e-8)\end{tabular} & \begin{tabular}[c]{@{}c@{}}1\\      (5.09e-17)\end{tabular}   &                                                               &                                                             \\
4   & \begin{tabular}[c]{@{}c@{}}0.99\\      (2.50e-6)\end{tabular} & 1                                                             & 1                                                              & 1                                                              &                                                             & \begin{tabular}[c]{@{}c@{}}0.99\\      (1.47e-7)\end{tabular} & \begin{tabular}[c]{@{}c@{}}0.99\\      (4.47e-8)\end{tabular} & \begin{tabular}[c]{@{}c@{}}0.99\\      (1.71e-8)\end{tabular} & \begin{tabular}[c]{@{}c@{}}1\\      (5.70e-17)\end{tabular}   &                                                             \\
5   & \begin{tabular}[c]{@{}c@{}}0.99\\      (2.65e-6)\end{tabular} & 1                                                             & 1                                                              & 1                                                              & 1                                                           & \begin{tabular}[c]{@{}c@{}}0.99\\      (1.49e-7)\end{tabular} & \begin{tabular}[c]{@{}c@{}}0.99\\      (5.84e-8)\end{tabular} & \begin{tabular}[c]{@{}c@{}}0.99\\      (3.30e-8)\end{tabular} & \begin{tabular}[c]{@{}c@{}}0.99\\      (1.52e-8)\end{tabular} & \begin{tabular}[c]{@{}c@{}}1\\      (7.20e-17)\end{tabular} \\\midrule
    & \multicolumn{5}{c}{MATH}                                                                                                                                                                                                                                                                                                      & \multicolumn{5}{c}{GPQA}                                                                                                                                                                                                                                                                                                    \\\midrule
$\sigma^2_{a}$ & 1                                                             & 2                                                             & 3                                                              & 4                                                              & 5                                                           & 1                                                             & 2                                                             & 3                                                             & 4                                                             & 5                                                           \\
1   & \begin{tabular}[c]{@{}c@{}}1\\      (0.00e+0)\end{tabular}    &                                                               &                                                                &                                                                &                                                             & \begin{tabular}[c]{@{}c@{}}1\\      (2.55e-17)\end{tabular}   &                                                               &                                                               &                                                               &                                                             \\
2   & \begin{tabular}[c]{@{}c@{}}0.99\\      (1.75e-8)\end{tabular} & \begin{tabular}[c]{@{}c@{}}1\\      (0.00e+0)\end{tabular}    &                                                                &                                                                &                                                             & \begin{tabular}[c]{@{}c@{}}0.99\\      (1.90e-6)\end{tabular} & \begin{tabular}[c]{@{}c@{}}1\\      (3.60e-17)\end{tabular}   &                                                               &                                                               &                                                             \\
3   & \begin{tabular}[c]{@{}c@{}}0.99\\      (1.68e-8)\end{tabular} & \begin{tabular}[c]{@{}c@{}}0.99\\      (1.91e-9)\end{tabular} & \begin{tabular}[c]{@{}c@{}}1\\      (2.55e-17)\end{tabular}    &                                                                &                                                             & \begin{tabular}[c]{@{}c@{}}0.99\\      (2.56e-6)\end{tabular} & \begin{tabular}[c]{@{}c@{}}0.99\\      (4.86e-7)\end{tabular} & \begin{tabular}[c]{@{}c@{}}1\\      (3.60e-17)\end{tabular}   &                                                               &                                                             \\
4   & \begin{tabular}[c]{@{}c@{}}0.99\\      (1.67e-8)\end{tabular} & \begin{tabular}[c]{@{}c@{}}0.99\\      (2.57e-9)\end{tabular} & \begin{tabular}[c]{@{}c@{}}0.99\\      (9.11e-10)\end{tabular} & \begin{tabular}[c]{@{}c@{}}1\\      (2.55e-17)\end{tabular}    &                                                             & \begin{tabular}[c]{@{}c@{}}0.99\\      (2.83e-6)\end{tabular} & \begin{tabular}[c]{@{}c@{}}0.99\\      (7.55e-7)\end{tabular} & \begin{tabular}[c]{@{}c@{}}0.99\\      (9.82e-7)\end{tabular} & \begin{tabular}[c]{@{}c@{}}1\\      (4.41e-17)\end{tabular}   &                                                             \\
5   & \begin{tabular}[c]{@{}c@{}}0.99\\      (1.64e-8)\end{tabular} & \begin{tabular}[c]{@{}c@{}}0.99\\      (3.12e-9)\end{tabular} & \begin{tabular}[c]{@{}c@{}}0.99\\      (1.41e-9)\end{tabular}  & \begin{tabular}[c]{@{}c@{}}0.99\\      (8.90e-10)\end{tabular} & \begin{tabular}[c]{@{}c@{}}1\\      (2.55e-17)\end{tabular} & \begin{tabular}[c]{@{}c@{}}0.99\\      (2.88e-6)\end{tabular} & \begin{tabular}[c]{@{}c@{}}0.99\\      (8.36e-7)\end{tabular} & \begin{tabular}[c]{@{}c@{}}0.99\\      (8.06e-7)\end{tabular} & \begin{tabular}[c]{@{}c@{}}0.99\\      (2.44e-7)\end{tabular} & \begin{tabular}[c]{@{}c@{}}1\\      (4.41e-17)\end{tabular} \\\midrule
    & \multicolumn{5}{c}{MuSR}                                                                                                                                                                                                                                                                                                      & \multicolumn{5}{c}{MMLU-Pro}                                                                                                                                                                                                                                                                                                \\\midrule
$\sigma^2_{a}$ & 1                                                             & 2                                                             & 3                                                              & 4                                                              & 5                                                           & 1                                                             & 2                                                             & 3                                                             & 4                                                             & 5                                                           \\
1   & \begin{tabular}[c]{@{}c@{}}1\\      (7.20e-17)\end{tabular}   &                                                               &                                                                &                                                                &                                                             & \begin{tabular}[c]{@{}c@{}}1\\      (0.00e+0)\end{tabular}    &                                                               &                                                               &                                                               &                                                             \\
2   & \begin{tabular}[c]{@{}c@{}}0.99\\      (2.27e-5)\end{tabular} & \begin{tabular}[c]{@{}c@{}}1\\      (6.24e-17)\end{tabular}   &                                                                &                                                                &                                                             & \begin{tabular}[c]{@{}c@{}}0.99\\      (9.95e-8)\end{tabular} & \begin{tabular}[c]{@{}c@{}}1\\      (7.85e-17)\end{tabular}   &                                                               &                                                               &                                                             \\
3   & \begin{tabular}[c]{@{}c@{}}0.92\\      (3.58e-1)\end{tabular} & \begin{tabular}[c]{@{}c@{}}0.92\\      (3.58e-1)\end{tabular} & \begin{tabular}[c]{@{}c@{}}1\\      (6.24e-17)\end{tabular}    &                                                                &                                                             & \begin{tabular}[c]{@{}c@{}}0.99\\      (1.03e-7)\end{tabular} & \begin{tabular}[c]{@{}c@{}}0.99\\      (1.26e-8)\end{tabular} & \begin{tabular}[c]{@{}c@{}}1\\      (7.85e-17)\end{tabular}   &                                                               &                                                             \\
4   & \begin{tabular}[c]{@{}c@{}}0.92\\      (3.59e-1)\end{tabular} & \begin{tabular}[c]{@{}c@{}}0.92\\      (3.58e-1)\end{tabular} & \begin{tabular}[c]{@{}c@{}}0.99\\      (2.86e-6)\end{tabular}  & \begin{tabular}[c]{@{}c@{}}1\\      (6.24e-17)\end{tabular}    &                                                             & \begin{tabular}[c]{@{}c@{}}0.99\\      (1.10e-7)\end{tabular} & \begin{tabular}[c]{@{}c@{}}0.99\\      (1.52e-8)\end{tabular} & \begin{tabular}[c]{@{}c@{}}0.99\\      (6.28e-9)\end{tabular} & \begin{tabular}[c]{@{}c@{}}1\\      (7.85e-17)\end{tabular}   &                                                             \\
5   & \begin{tabular}[c]{@{}c@{}}0.92\\      (3.59e-1)\end{tabular} & \begin{tabular}[c]{@{}c@{}}0.92\\      (3.58e-1)\end{tabular} & \begin{tabular}[c]{@{}c@{}}0.99\\      (3.47e-6)\end{tabular}  & \begin{tabular}[c]{@{}c@{}}0.99\\      (2.42e-7)\end{tabular}  & \begin{tabular}[c]{@{}c@{}}1\\      (7.20e-17)\end{tabular} & \begin{tabular}[c]{@{}c@{}}0.99\\      (1.11e-7)\end{tabular} & \begin{tabular}[c]{@{}c@{}}0.99\\      (1.77e-8)\end{tabular} & \begin{tabular}[c]{@{}c@{}}0.99\\      (9.97e-9)\end{tabular} & \begin{tabular}[c]{@{}c@{}}0.99\\      (3.66e-9)\end{tabular} & \begin{tabular}[c]{@{}c@{}}1\\      (7.85e-17)\end{tabular} \\ \bottomrule
\end{tabular}
\end{table}

\begin{table}[t]
\centering
\caption{Stability check for \texttt{cBMM}-recovered $\hat{\mathbf{a}}$. Each block corresponds to a specific benchmark suite. Cells report the mean and standard deviation (in parenthesis) of Spearman correlation coefficients across 50 replicates.}
\label{tab:stable2}
\tiny
\begin{tabular}{@{}ccccccccccc@{}}
\toprule
$\hat{\mathbf{a}}$   & \multicolumn{5}{c}{IFEval}                                                                                                                                                                                                                                                                                                  & \multicolumn{5}{c}{BBH}                                                                                                                                                                                                                                                                                                     \\ \midrule
$\sigma^2_{a}$ & 1                                                             & 2                                                             & 3                                                             & 4                                                             & 5                                                           & 1                                                             & 2                                                             & 3                                                             & 4                                                             & 5                                                           \\
1   & \begin{tabular}[c]{@{}c@{}}1\\      (2.55e-17)\end{tabular}   &                                                               &                                                               &                                                               &                                                             & \begin{tabular}[c]{@{}c@{}}1\\      (1.14e-16)\end{tabular}   &                                                               &                                                               &                                                               &                                                             \\
2   & \begin{tabular}[c]{@{}c@{}}0.99\\      (9.58e-7)\end{tabular} & 1                                                             &                                                               &                                                               &                                                             & \begin{tabular}[c]{@{}c@{}}0.99\\      (2.43e-5)\end{tabular} & \begin{tabular}[c]{@{}c@{}}1\\      (1.14e-16)\end{tabular}   &                                                               &                                                               &                                                             \\
3   & \begin{tabular}[c]{@{}c@{}}0.99\\      (2.07e-6)\end{tabular} & 1                                                             & 1                                                             &                                                               &                                                             & \begin{tabular}[c]{@{}c@{}}0.99\\      (2.85e-5)\end{tabular} & \begin{tabular}[c]{@{}c@{}}0.99\\      (6.16e-7)\end{tabular} & \begin{tabular}[c]{@{}c@{}}1\\      (1.14e-16)\end{tabular}   &                                                               &                                                             \\
4   & \begin{tabular}[c]{@{}c@{}}0.99\\      (2.50e-6)\end{tabular} & 1                                                             & 1                                                             & 1                                                             &                                                             & \begin{tabular}[c]{@{}c@{}}0.99\\      (2.61e-5)\end{tabular} & \begin{tabular}[c]{@{}c@{}}0.99\\      (1.15e-6)\end{tabular} & \begin{tabular}[c]{@{}c@{}}0.99\\      (3.18e-7)\end{tabular} & \begin{tabular}[c]{@{}c@{}}1\\      (1.14e-16)\end{tabular}   &                                                             \\
5   & \begin{tabular}[c]{@{}c@{}}0.99\\      (2.65e-6)\end{tabular} & 1                                                             & 1                                                             & 1                                                             & 1                                                           & \begin{tabular}[c]{@{}c@{}}0.99\\      (2.18e-5)\end{tabular} & \begin{tabular}[c]{@{}c@{}}0.99\\      (2.28e-6)\end{tabular} & \begin{tabular}[c]{@{}c@{}}0.99\\      (1.30e-6)\end{tabular} & \begin{tabular}[c]{@{}c@{}}0.99\\      (3.83e-7)\end{tabular} & \begin{tabular}[c]{@{}c@{}}1\\      (1.14e-16)\end{tabular} \\\midrule
    & \multicolumn{5}{c}{MATH}                                                                                                                                                                                                                                                                                                    & \multicolumn{5}{c}{GPQA}                                                                                                                                                                                                                                                                                                    \\\midrule
$\sigma^2_{a}$ & 1                                                             & 2                                                             & 3                                                             & 4                                                             & 5                                                           & 1                                                             & 2                                                             & 3                                                             & 4                                                             & 5                                                           \\
1   & \begin{tabular}[c]{@{}c@{}}1\\      (1.14e-16)\end{tabular}   &                                                               &                                                               &                                                               &                                                             & \begin{tabular}[c]{@{}c@{}}1\\      (0.00e+0)\end{tabular}    &                                                               &                                                               &                                                               &                                                             \\
2   & \begin{tabular}[c]{@{}c@{}}0.99\\      (3.63e-5)\end{tabular} & \begin{tabular}[c]{@{}c@{}}1\\      (1.14e-16)\end{tabular}   &                                                               &                                                               &                                                             & \begin{tabular}[c]{@{}c@{}}0.99\\      (9.34e-5)\end{tabular} & \begin{tabular}[c]{@{}c@{}}1\\      (0.00e+0)\end{tabular}    &                                                               &                                                               &                                                             \\
3   & \begin{tabular}[c]{@{}c@{}}0.99\\      (4.70e-5)\end{tabular} & \begin{tabular}[c]{@{}c@{}}0.99\\      (8.51e-6)\end{tabular} & \begin{tabular}[c]{@{}c@{}}1\\      (1.14e-16)\end{tabular}   &                                                               &                                                             & \begin{tabular}[c]{@{}c@{}}0.99\\      (7.47e-5)\end{tabular} & \begin{tabular}[c]{@{}c@{}}0.99\\      (8.78e-5)\end{tabular} & \begin{tabular}[c]{@{}c@{}}1\\      (0.00e+0)\end{tabular}    &                                                               &                                                             \\
4   & \begin{tabular}[c]{@{}c@{}}0.99\\      (4.95e-5)\end{tabular} & \begin{tabular}[c]{@{}c@{}}0.99\\      (1.04e-5)\end{tabular} & \begin{tabular}[c]{@{}c@{}}0.99\\      (1.13e-6)\end{tabular} & \begin{tabular}[c]{@{}c@{}}1\\      (1.14e-16)\end{tabular}   &                                                             & \begin{tabular}[c]{@{}c@{}}0.99\\      (1.17e-4)\end{tabular} & \begin{tabular}[c]{@{}c@{}}0.99\\      (1.26e-4)\end{tabular} & \begin{tabular}[c]{@{}c@{}}0.99\\      (9.80e-5)\end{tabular} & \begin{tabular}[c]{@{}c@{}}1\\      (0.00e+0)\end{tabular}    &                                                             \\
5   & \begin{tabular}[c]{@{}c@{}}0.99\\      (5.09e-5)\end{tabular} & \begin{tabular}[c]{@{}c@{}}0.99\\      (1.08e-5)\end{tabular} & \begin{tabular}[c]{@{}c@{}}0.99\\      (2.83e-6)\end{tabular} & \begin{tabular}[c]{@{}c@{}}0.99\\      (8.30e-7)\end{tabular} & \begin{tabular}[c]{@{}c@{}}1\\      (1.14e-16)\end{tabular} & \begin{tabular}[c]{@{}c@{}}0.99\\      (1.18e-4)\end{tabular} & \begin{tabular}[c]{@{}c@{}}0.99\\      (1.16e-4)\end{tabular} & \begin{tabular}[c]{@{}c@{}}0.99\\      (1.01e-4)\end{tabular} & \begin{tabular}[c]{@{}c@{}}0.99\\      (3.15e-5)\end{tabular} & \begin{tabular}[c]{@{}c@{}}1\\      (0.00e+0)\end{tabular}  \\\midrule
    & \multicolumn{5}{c}{MuSR}                                                                                                                                                                                                                                                                                                    & \multicolumn{5}{c}{MMLU-Pro}                                                                                                                                                                                                                                                                                                \\\midrule
$\sigma^2_{a}$ & 1                                                             & 2                                                             & 3                                                             & 4                                                             & 5                                                           & 1                                                             & 2                                                             & 3                                                             & 4                                                             & 5                                                           \\
1   & \begin{tabular}[c]{@{}c@{}}1\\      (5.70e-17)\end{tabular}   &                                                               &                                                               &                                                               &                                                             & \begin{tabular}[c]{@{}c@{}}1\\      (7.85e-17)\end{tabular}   &                                                               &                                                               &                                                               &                                                             \\
2   & \begin{tabular}[c]{@{}c@{}}0.99\\      (8.42e-4)\end{tabular} & \begin{tabular}[c]{@{}c@{}}1\\      (7.64e-17)\end{tabular}   &                                                               &                                                               &                                                             & \begin{tabular}[c]{@{}c@{}}0.99\\      (1.44e-5)\end{tabular} & \begin{tabular}[c]{@{}c@{}}1\\      (5.55e-17)\end{tabular}   &                                                               &                                                               &                                                             \\
3   & \begin{tabular}[c]{@{}c@{}}0.92\\      (3.86e-1)\end{tabular} & \begin{tabular}[c]{@{}c@{}}0.92\\      (3.86e-1)\end{tabular} & \begin{tabular}[c]{@{}c@{}}1\\      (6.74e-17)\end{tabular}   &                                                               &                                                             & \begin{tabular}[c]{@{}c@{}}0.99\\      (1.73e-5)\end{tabular} & \begin{tabular}[c]{@{}c@{}}0.99\\      (3.19e-6)\end{tabular} & \begin{tabular}[c]{@{}c@{}}1\\      (5.55e-17)\end{tabular}   &                                                               &                                                             \\
4   & \begin{tabular}[c]{@{}c@{}}0.92\\      (3.87e-1)\end{tabular} & \begin{tabular}[c]{@{}c@{}}0.92\\      (3.87e-1)\end{tabular} & \begin{tabular}[c]{@{}c@{}}0.99\\      (2.26e-4)\end{tabular} & \begin{tabular}[c]{@{}c@{}}1\\      (7.20e-17)\end{tabular}   &                                                             & \begin{tabular}[c]{@{}c@{}}0.99\\      (1.67e-5)\end{tabular} & \begin{tabular}[c]{@{}c@{}}0.99\\      (1.40e-5)\end{tabular} & \begin{tabular}[c]{@{}c@{}}0.99\\      (3.76e-6)\end{tabular} & \begin{tabular}[c]{@{}c@{}}1\\      (0.00e+0)\end{tabular}    &                                                             \\
5   & \begin{tabular}[c]{@{}c@{}}0.92\\      (3.87e-1)\end{tabular} & \begin{tabular}[c]{@{}c@{}}0.92\\      (3.87e-1)\end{tabular} & \begin{tabular}[c]{@{}c@{}}0.99\\      (3.20e-4)\end{tabular} & \begin{tabular}[c]{@{}c@{}}0.99\\      (1.65e-4)\end{tabular} & \begin{tabular}[c]{@{}c@{}}1\\      (7.20e-17)\end{tabular} & \begin{tabular}[c]{@{}c@{}}0.99\\      (2.00e-5)\end{tabular} & \begin{tabular}[c]{@{}c@{}}0.99\\      (2.57e-5)\end{tabular} & \begin{tabular}[c]{@{}c@{}}0.99\\      (1.07e-5)\end{tabular} & \begin{tabular}[c]{@{}c@{}}0.99\\      (1.76e-6)\end{tabular} & \begin{tabular}[c]{@{}c@{}}1\\      (0.00e+0)\end{tabular}  \\ \bottomrule
\end{tabular}
\end{table}

\begin{table}[t]
\centering
\caption{Stability check for \texttt{cBMM}-recovered $\hat{\mathbf{b}}$. Each block corresponds to a specific benchmark suite. Cells report the mean and standard deviation (in parenthesis) of Spearman correlation coefficients across 50 replicates.}
\label{tab:stable3}
\tiny
\begin{tabular}{@{}ccccccccccc@{}}
\toprule
$\hat{\mathbf{b}}$   & \multicolumn{5}{c}{IFEval}                                                                                                                                                                                                                                                                                                  & \multicolumn{5}{c}{BBH}                                                                                                                                                                                                                                                                                                     \\ \midrule
$\sigma^2_{a}$ & 1                                                             & 2                                                             & 3                                                             & 4                                                             & 5                                                           & 1                                                             & 2                                                             & 3                                                             & 4                                                             & 5                                                           \\
1   & \begin{tabular}[c]{@{}c@{}}1\\      (2.55e-17)\end{tabular}   &                                                               &                                                               &                                                               &                                                             & \begin{tabular}[c]{@{}c@{}}1\\      (0.00e+0)\end{tabular}    &                                                               &                                                               &                                                               &                                                             \\
2   & \begin{tabular}[c]{@{}c@{}}0.99\\      (9.58e-7)\end{tabular} & 1                                                             &                                                               &                                                               &                                                             & \begin{tabular}[c]{@{}c@{}}0.99\\      (8.77e-7)\end{tabular} & \begin{tabular}[c]{@{}c@{}}1\\      (0.00e+0)\end{tabular}    &                                                               &                                                               &                                                             \\
3   & \begin{tabular}[c]{@{}c@{}}0.99\\      (2.07e-6)\end{tabular} & 1                                                             & 1                                                             &                                                               &                                                             & \begin{tabular}[c]{@{}c@{}}0.99\\      (9.51e-7)\end{tabular} & \begin{tabular}[c]{@{}c@{}}0.99\\      (1.00e-7)\end{tabular} & \begin{tabular}[c]{@{}c@{}}1\\      (8.82e-17)\end{tabular}   &                                                               &                                                             \\
4   & \begin{tabular}[c]{@{}c@{}}0.99\\      (2.50e-6)\end{tabular} & 1                                                             & 1                                                             & 1                                                             &                                                             & \begin{tabular}[c]{@{}c@{}}0.99\\      (1.03e-6)\end{tabular} & \begin{tabular}[c]{@{}c@{}}0.99\\      (3.07e-7)\end{tabular} & \begin{tabular}[c]{@{}c@{}}0.99\\      (6.15e-8)\end{tabular} & \begin{tabular}[c]{@{}c@{}}1\\      (1.02e-16)\end{tabular}   &                                                             \\
5   & \begin{tabular}[c]{@{}c@{}}0.99\\      (2.65e-6)\end{tabular} & 1                                                             & 1                                                             & 1                                                             & 1                                                           & \begin{tabular}[c]{@{}c@{}}0.99\\      (1.12e-6)\end{tabular} & \begin{tabular}[c]{@{}c@{}}0.99\\      (5.10e-7)\end{tabular} & \begin{tabular}[c]{@{}c@{}}0.99\\      (1.86e-7)\end{tabular} & \begin{tabular}[c]{@{}c@{}}0.99\\      (4.91e-8)\end{tabular} & \begin{tabular}[c]{@{}c@{}}1\\      (5.09e-17)\end{tabular} \\\midrule
    & \multicolumn{5}{c}{MATH}                                                                                                                                                                                                                                                                                                    & \multicolumn{5}{c}{GPQA}                                                                                                                                                                                                                                                                                                    \\\midrule
$\sigma^2_{a}$ & 1                                                             & 2                                                             & 3                                                             & 4                                                             & 5                                                           & 1                                                             & 2                                                             & 3                                                             & 4                                                             & 5                                                           \\
1   & \begin{tabular}[c]{@{}c@{}}1\\      (0.00e+0)\end{tabular}    &                                                               &                                                               &                                                               &                                                             & \begin{tabular}[c]{@{}c@{}}1\\      (2.55e-17)\end{tabular}   &                                                               &                                                               &                                                               &                                                             \\
2   & \begin{tabular}[c]{@{}c@{}}0.99\\      (9.23e-5)\end{tabular} & \begin{tabular}[c]{@{}c@{}}1\\      (0.00e+0)\end{tabular}    &                                                               &                                                               &                                                             & \begin{tabular}[c]{@{}c@{}}0.99\\      (1.90e-6)\end{tabular} & \begin{tabular}[c]{@{}c@{}}1\\      (2.55e-17)\end{tabular}   &                                                               &                                                               &                                                             \\
3   & \begin{tabular}[c]{@{}c@{}}0.99\\      (1.38e-4)\end{tabular} & \begin{tabular}[c]{@{}c@{}}0.99\\      (8.82e-5)\end{tabular} & \begin{tabular}[c]{@{}c@{}}1\\      (0.00e+0)\end{tabular}    &                                                               &                                                             & \begin{tabular}[c]{@{}c@{}}0.99\\      (2.21e-6)\end{tabular} & \begin{tabular}[c]{@{}c@{}}0.99\\      (1.12e-6)\end{tabular} & \begin{tabular}[c]{@{}c@{}}1\\      (3.60e-17)\end{tabular}   &                                                               &                                                             \\
4   & \begin{tabular}[c]{@{}c@{}}0.99\\      (1.50e-4)\end{tabular} & \begin{tabular}[c]{@{}c@{}}0.99\\      (1.16e-4)\end{tabular} & \begin{tabular}[c]{@{}c@{}}0.99\\      (3.94e-5)\end{tabular} & \begin{tabular}[c]{@{}c@{}}1\\      (0.00e+0)\end{tabular}    &                                                             & \begin{tabular}[c]{@{}c@{}}0.99\\      (2.72e-6)\end{tabular} & \begin{tabular}[c]{@{}c@{}}0.99\\      (1.78e-6)\end{tabular} & \begin{tabular}[c]{@{}c@{}}0.99\\      (1.60e-6)\end{tabular} & \begin{tabular}[c]{@{}c@{}}1\\      (5.70e-17)\end{tabular}   &                                                             \\
5   & \begin{tabular}[c]{@{}c@{}}0.99\\      (1.51e-4)\end{tabular} & \begin{tabular}[c]{@{}c@{}}0.99\\      (1.19e-4)\end{tabular} & \begin{tabular}[c]{@{}c@{}}0.99\\      (4.28e-5)\end{tabular} & \begin{tabular}[c]{@{}c@{}}0.99\\      (4.71e-6)\end{tabular} & \begin{tabular}[c]{@{}c@{}}1\\      (8.82e-17)\end{tabular} & \begin{tabular}[c]{@{}c@{}}0.99\\      (3.46e-6)\end{tabular} & \begin{tabular}[c]{@{}c@{}}0.99\\      (2.40e-6)\end{tabular} & \begin{tabular}[c]{@{}c@{}}0.99\\      (1.63e-6)\end{tabular} & \begin{tabular}[c]{@{}c@{}}0.99\\      (3.26e-7)\end{tabular} & \begin{tabular}[c]{@{}c@{}}1\\      (6.74e-17)\end{tabular} \\\midrule
    & \multicolumn{5}{c}{MuSR}                                                                                                                                                                                                                                                                                                    & \multicolumn{5}{c}{MMLU-Pro}                                                                                                                                                                                                                                                                                                \\\midrule
$\sigma^2_{a}$ & 1                                                             & 2                                                             & 3                                                             & 4                                                             & 5                                                           & 1                                                             & 2                                                             & 3                                                             & 4                                                             & 5                                                           \\
1   & \begin{tabular}[c]{@{}c@{}}1\\      (0.00e+0)\end{tabular}    &                                                               &                                                               &                                                               &                                                             & \begin{tabular}[c]{@{}c@{}}1\\      (5.55e-17)\end{tabular}   &                                                               &                                                               &                                                               &                                                             \\
2   & \begin{tabular}[c]{@{}c@{}}0.99\\      (2.28e-6)\end{tabular} & \begin{tabular}[c]{@{}c@{}}1\\      (0.00e+0)\end{tabular}    &                                                               &                                                               &                                                             & \begin{tabular}[c]{@{}c@{}}0.99\\      (1.95e-6)\end{tabular} & \begin{tabular}[c]{@{}c@{}}1\\      (7.85e-17)\end{tabular}   &                                                               &                                                               &                                                             \\
3   & \begin{tabular}[c]{@{}c@{}}0.99\\      (2.11e-3)\end{tabular} & \begin{tabular}[c]{@{}c@{}}0.99\\      (2.10e-3)\end{tabular} & \begin{tabular}[c]{@{}c@{}}1\\      (0.00e+0)\end{tabular}    &                                                               &                                                             & \begin{tabular}[c]{@{}c@{}}0.99\\      (2.78e-6)\end{tabular} & \begin{tabular}[c]{@{}c@{}}0.99\\      (2.33e-7)\end{tabular} & \begin{tabular}[c]{@{}c@{}}1\\      (7.85e-17)\end{tabular}   &                                                               &                                                             \\
4   & \begin{tabular}[c]{@{}c@{}}0.99\\      (2.11e-3)\end{tabular} & \begin{tabular}[c]{@{}c@{}}0.99\\      (2.11e-3)\end{tabular} & \begin{tabular}[c]{@{}c@{}}0.99\\      (5.24e-7)\end{tabular} & \begin{tabular}[c]{@{}c@{}}1\\      (0.00e+0)\end{tabular}    &                                                             & \begin{tabular}[c]{@{}c@{}}0.99\\      (3.05e-6)\end{tabular} & \begin{tabular}[c]{@{}c@{}}0.99\\      (3.77e-7)\end{tabular} & \begin{tabular}[c]{@{}c@{}}0.99\\      (5.36e-8)\end{tabular} & \begin{tabular}[c]{@{}c@{}}1\\      (5.55e-17)\end{tabular}   &                                                             \\
5   & \begin{tabular}[c]{@{}c@{}}0.99\\      (2.11e-3)\end{tabular} & \begin{tabular}[c]{@{}c@{}}0.99\\      (2.11e-3)\end{tabular} & \begin{tabular}[c]{@{}c@{}}0.99\\      (5.96e-7)\end{tabular} & \begin{tabular}[c]{@{}c@{}}0.99\\      (1.42e-7)\end{tabular} & \begin{tabular}[c]{@{}c@{}}1\\      (0.00e+0)\end{tabular}  & \begin{tabular}[c]{@{}c@{}}0.99\\      (3.20e-6)\end{tabular} & \begin{tabular}[c]{@{}c@{}}0.99\\      (4.40e-7)\end{tabular} & \begin{tabular}[c]{@{}c@{}}0.99\\      (1.21e-7)\end{tabular} & \begin{tabular}[c]{@{}c@{}}0.99\\      (2.04e-8)\end{tabular} & \begin{tabular}[c]{@{}c@{}}1\\      (0.00e+0)\end{tabular}  \\ \bottomrule
\end{tabular}
\end{table}

\subsection{Real-world benchmark details}\label{supp:benchmarkdetail}
\textbf{MATH-500.}\quad
We include the following list of LLM candidates in Table \ref{tab:llm_list} for the MATH-500 benchmark suite.
\begin{table}[t]
    \centering
    \caption{List of LLM candidates in the MATH-500 suite}
    \label{tab:llm_list}
    \scriptsize
    \begin{minipage}[t]{0.49\textwidth}
        \begin{itemize}
            \item 01-ai/Yi-34B
            \item baidu/ERNIE-4.5-21B-A3B-PT
            \item baidu/ERNIE-4.5-21B-A3B-Thinking
            \item deepseek-ai/DeepSeek-R1-0528-Qwen3-8B
            \item deepseek-ai/DeepSeek-R1-Distill-Llama-8B
            \item deepseek-ai/DeepSeek-R1-Distill-Qwen-1.5B
            \item deepseek-ai/DeepSeek-R1-Distill-Qwen-14B
            \item deepseek-ai/DeepSeek-R1-Distill-Qwen-32B
            \item deepseek-ai/DeepSeek-R1-Distill-Qwen-7B
            \item dphn/dolphin-2.9.1-yi-1.5-34b
            \item dphn/Dolphin-Mistral-24B-Venice-Edition
            \item google/gemma-2-27b-it
            \item google/gemma-2b-it
            \item google/gemma-3-1b-pt
            \item HuggingFaceTB/SmolLM3-3B
            \item huihui-ai/Huihui-gpt-oss-20b-BF16-abliterated
            \item huihui-ai/Huihui-Qwen3-8B-abliterated-v2
            \item ibm-granite/granite-3.3-2b-instruct
            \item internlm/internlm2-chat-20b
            \item LGAI-EXAONE/EXAONE-4.0.1-32B
            \item LLM360/K2-Think
            \item meta-llama/Llama-2-7b-chat-hf
            \item meta-llama/Llama-2-7b-hf
            \item meta-llama/Llama-3.1-8B-Instruct
            \item meta-llama/Llama-3.2-1B-Instruct
            \item meta-llama/Llama-3.2-3B-Instruct
            \item meta-llama/Meta-Llama-3-8B-Instruct
            \item microsoft/Phi-3.5-mini-instruct
            \item microsoft/Phi-4-mini-instruct
            \item microsoft/Phi-4-mini-reasoning
            \item microsoft/Phi-4-mini-reasoning-plus
            \item mistralai/Mistral-7B-Instruct-v0.1
            \item mistralai/Mistral-7B-Instruct-v0.2
            \item mistralai/Mistral-7B-Instruct-v0.3
            \item mistralai/Mistral-Small-Instruct-2409
        \end{itemize}
    \end{minipage}
    \hfill
    \begin{minipage}[t]{0.49\textwidth}
        \begin{itemize}
            \item moonshotai/Moonlight-16B-A3B
            \item moonshotai/Moonlight-16B-A3B-Instruct
            \item nvidia/AceReason-Nemotron-1.1-7B
            \item nvidia/AceReason-Nemotron-14B
            \item nvidia/Llama-3.1-Nemotron-8B-UltraLong-4M-Instruct
            \item nvidia/Nemotron-Research-Reasoning-Qwen-1.5B
            \item nvidia/NVIDIA-Nemotron-Nano-12B-v2
            \item nvidia/NVIDIA-Nemotron-Nano-9B-v2
            \item nvidia/OpenReasoning-Nemotron-1.5B
            \item nvidia/OpenReasoning-Nemotron-7B
            \item openai/gpt-oss-20b
            \item openbmb/MiniCPM4.1-8B
            \item Qwen/Qwen1.5-32B
            \item Qwen/Qwen2.5-0.5B-Instruct
            \item Qwen/Qwen2.5-1.5B-Instruct
            \item Qwen/Qwen2.5-14B-Instruct
            \item Qwen/Qwen2.5-32B-Instruct
            \item Qwen/Qwen2.5-3B-Instruct
            \item Qwen/Qwen2.5-7B-Instruct
            \item Qwen/Qwen2-7B-Instruct
            \item Qwen/Qwen3-0.6B
            \item Qwen/Qwen3-1.7B
            \item Qwen/Qwen3-4B-Thinking-2507
            \item Qwen/Qwen3-8B
            \item Qwen/QwQ-32B
            \item Qwen/Qwen3-14B
            \item Qwen/Qwen3-30B-A3B
            \item Qwen/Qwen3-30B-A3B-Instruct-2507
            \item Qwen/Qwen3-30B-A3B-Thinking-2507
            \item Qwen/Qwen3-32B
            \item Qwen/Qwen3-4B
            \item Qwen/Qwen3-4B-Instruct-2507
            \item swiss-ai/Apertus-8B-Instruct-2509
            \item zai-org/GLM-4-32B-0414
            \item zai-org/GLM-4-9B-0414
        \end{itemize}
    \end{minipage}
\end{table}
Inference parameters as in \cite{lart} are set as follows: temperature is 0.5, top-p is 0.95, and repetition penalty is 1.05, with a maximum output length of 10,240 tokens. The adopted prompt templates are shown in Table \ref{tab:prompts}, where \{problem\} represents the specific question for each item.
\begin{table}[t]
\centering
\caption{Prompt templates used in MATH-500 evaluation.}
\label{tab:prompts}
\scriptsize
\renewcommand{\arraystretch}{1.5}
\begin{tabular}{|p{0.45\textwidth}|p{0.45\textwidth}|}
\hline
\textbf{Zero-shot} & \textbf{One-shot} \\ \hline
Solve the following math problem. Be clear and concise. Problem: ``\{problem\}'' Provide a \textbf{step-by-step solution}. Start each step with a number followed by a period (e.g., `1.', `2.', etc.). Use basic LaTeX for mathematical expressions, such as for fractions, exponents, and variables. Avoid complex formatting. At the very end of your entire response, and only at the very end, state the final answer. This final answer must be enclosed in a single LaTeX box, like so: \fbox{Answer}.
&
Solve the following math problem. Please think \textbf{step-by-step} to obtain the solution. Use basic LaTeX for mathematical expressions, such as for fractions, exponents, and variables. Avoid complex formatting. At the very end of your entire response, and only at the very end, state the final answer. This final answer must be enclosed in a single LaTeX box, like so: \fbox{Answer}.

Here is an example of how to format your response and think about solving the problem:

Example Problem: What is the sum of the two values of $x$ for which $(x + 3)^2 = 121$?

Example Solution: Expanding the left side, we have $x^2 + 6x + 9 = 121 \Rightarrow x^2 + 6x - 112 = 0$. For a quadratic with the equation $ax^2 + bx + c = 0$, the sum of the roots is $-b/a$. Applying this formula to the problem, we have that the sum of the two roots is $-6/1 = \fbox{$-6$}$.

Solution: \fbox{$-6$}

New Problem: \{problem\}. \\ \hline
\end{tabular}
\end{table}

\textbf{Hugging Face Open LLM Leaderboard.}\quad
We refer to \cite{wu2026} when formulating response matrix from Hugging Face's Open LLM Leaderboard. Specifically, we utilized the raw historical data $M_1$ from \url{https://github.com/skbwu/efficiently-evaluating-llms}, comprising item-level accuracy metadata for six major benchmarks: IFEval, BBH, MATH, GPQA, MuSR, and MMLU-Pro. The dataset was retrieved through the \texttt{huggingface\_hub} API by iterating through a cohort of 2211 recent evaluation repositories, spanning the period from June to December 2024, under the Leaderboard v2 schema.

\textbf{Ranking flow details.}\quad
Full ranking information based on average accuracy and recovered ability across the seven benchmarks is provided in the Supplementary Material (\texttt{.csv}).

\end{document}